%
%
%
%
%
%
%
\documentclass{svjour3}  
%
\smartqed  
\usepackage{graphicx}
\usepackage{bbold}
\usepackage{optidef}

\usepackage{amsthm,amsmath}
\usepackage[ruled,vlined]{algorithm2e}
\theoremstyle{definition}
\newtheorem{defn}{Definition}
\theoremstyle{plain}
\newtheorem{thm}{Theorem}
\newtheorem{lem}[thm]{Lemma}

\usepackage[utf8]{inputenc}
\usepackage[T1]{fontenc}
\usepackage{optidef}
\usepackage{hyperref}
\hypersetup{
    colorlinks=true,
    linkcolor=blue,
    filecolor=magenta,      
    urlcolor=black,
}
\DeclareUnicodeCharacter{3000}{} 
\usepackage{mathptmx}      
%
%
%
%
\begin{document}

\title{Learning subtree pattern importance for 　Weisfeiler-Lehman based  graph kernels}


\author{Dai Hai Nguyen         \and Canh Hao Nguyen
        \and Hiroshi Mamitsuka 
}


\institute{Dai Hai Nguyen \at
              Graduate School of Frontier Sciences, The University of Tokyo, 5-1-5 Kashiwa-no-ha, Kashiwa, Chiba 277-8561, Japan \\
              \email{hai@k.u-tokyo.ac.jp}           
           \and
           Canh Hao Nguyen \at
            Bioinformatics Center, Institute for Chemical Research, Kyoto University, Uji 611-0011, Japan\\
            \email{canhhao@kuicr.kyoto-u.ac.jp}  
            \and
            Hiroshi Mamitsuka \at
            Bioinformatics Center, Institute for Chemical Research, Kyoto University, Uji 611-0011, Japan and Department of Computer Science, Alato University, Espoo 02150, Finland\\
            \email{mami@kuicr.kyoto-u.ac.jp}  
}

\date{Received: date / Accepted: date}
\maketitle
\begin{abstract}{Graph is an usual representation of relational data, which are ubiquitous in many domains such as molecules, biological and social networks. A popular approach to learning with graph structured data is to make use of graph kernels, which measure the similarity between graphs and are plugged into a kernel machine such as a support vector machine. Weisfeiler-Lehman (WL) based graph kernels, which employ WL labeling scheme to extract subtree patterns and perform node embedding, are demonstrated to achieve great performance while being efficiently computable. However, one of the main drawbacks of a general kernel is the decoupling of kernel construction and learning process. For molecular graphs, usual kernels such as WL subtree, based on substructures of the molecules, consider all available substructures having the same importance, which might not be suitable in practice. In this paper, we propose a method to learn the weights of subtree patterns in the framework of WWL kernels, the state of the art method for graph classification task \cite{togninalli2019wasserstein}. To overcome the computational issue on large scale data sets, we present an efficient learning algorithm and also derive a generalization gap bound to show its convergence. Finally, through experiments on synthetic and real-world data sets, we demonstrate the effectiveness of our proposed method for learning the weights of subtree patterns.}
\keywords{Graph kernel \and Optimal transport \and Weisfeiler Lehman scheme}
\end{abstract}

\section{Introduction}
\label{intro}
Graphs are natural data structures, which appear in various domains such as bioinformatics \cite{sharan2006modeling}, cheminformatics \cite{trinajstic2018chemical}, social network analysis \cite{scott2011social} and so on, where nodes (vertices) represent objects and edges represent the relations between them. A popular approach to learning with graph structured data is to make use of graph kernels. Essentially, a graph kernel is a measure of the similarity between two graphs and must satisfy two fundamental requirements of being a valid kernel: 1) symmetric and 2) positive semi-definite (PSD). Furthermore, the requirements of designing a graph kernel are: it should capture the semantic inherent in the graph structures (e.g. substructures of different levels), and it must be efficiently computable \cite{vishwanathan2010graph}. 

A number of graph kernels have been proposed in literature such as
random walk \cite{kashima2003marginalized}, shortest path \cite{borgwardt2005shortest}, Weisfeiler-Lehman (WL) subtree \cite{shervashidze2009fast} kernels, just to name a few. Most of them are based on $\mathcal{R}$-Convolution framework \cite{haussler1999convolution}, which decomposes two graphs into substructures and adds up the similarities between their substructures to compute kernel values.  Different graph kernels
are defined under different ways of decomposition (types of substructures). For instance,
the substructures can be random walks \cite{kashima2003marginalized}, shortest paths \cite{borgwardt2005shortest} or subtree patterns \cite{vishwanathan2010graph}. Among these, WL subtree kernels have been shown to achieve great prediction performance while being efficiently computable. The key point is that it simply employs a WL based color refinement scheme to embed each node in a given graph into a vector of WL labels, which correspond to \emph{subtree patterns} of the graphs. Then, the kernel between two graphs is defined as the sum of all pairwise similarities between any two node embeddings of the two graphs.

Following a different approach, WL based optimal assignment (WL-OA) kernel \cite{kriege2016valid} assigns one node embedding of one graph to one embedding of the other such that the total similarities
between assigned node embeddings is maximized. This is also known as optimal assignment problem in combinatorial mathematics \cite{munkres1957algorithms}. In a similar vein, Wasserstein WL (WWL, \cite{togninalli2019wasserstein}) uses optimal transport (OT), also known as Wasserstein distance \cite{villani2008optimal}, for measuring the distance between two graphs based on their WL node embeddings. The distance is then converted into a similarity matrix through Laplacian kernel. Furthermore, both of these similarity matrices are shown to be valid kernels due to the hierachy property of WL labels (see \cite{kriege2016valid} and \cite{togninalli2019wasserstein} for more details).

One of the main drawbacks of these kernels is that they are predefined feature extraction without learning the importance of substructures to the problem. This results in the decoupling of data representation and learning process. In these kernels, substructures are given the same weights. However, for the problems such as molecule classification, it is known that only subparts of the molecules are responsible for their properties. Therefore, we wish to be able to give weights to their substructures to have higher classification performance and model interpretation. Based on this motivation, we propose a model to learn the weights of \textit{subtree patterns} (extracted by WL labeling scheme). Our work extends WWL kernels \cite{togninalli2019wasserstein} by formulating an OT based distance as a parametric function of subtree pattern weights before converting into kernels.
We also propose an efficient stochastic learning algorithm to estimate the weights and derive a generalization gap bound for the algorithm. Finally, through experiments on synthetic and four real-world data sets, we show that learning important subtree patterns by our proposed method can lead to more accurate predictive performance and extract important patterns which enhance the classification results.

The remainder of the paper is organized as follows: in Section 2, we review graph kernels which are based on WL labeling scheme, including WL subtree, WL-OA and WWL kernels. In Section 3, we present our method that parameterizes the Wasserstein distance between two graphs with WL labeling scheme as a function of subtree patterns and present the stochastic algorithm for learning parameters of the function. In Section 4, we derive a generalization bound for the learning algorithm. In Section 5, experimental results on the synthetic and real-world data sets are provided. Finally, we conclude by summarizing this work and discussing possible extensions in Section 6.
\section{Related work}
In this paper, we consider the binary classification problem for graph structured data: given a collection of labeled graphs $(g_{i}, y_{i}), i=1,..,n$ (where $n$ is the number of examples) drawn from an unknown joint distribution $\mathcal{P}$ over $\mathcal{G}\times\{-1,1\}$, where $\mathcal{G}$ is a space of graphs. We wish to learn a classifier $h:\mathcal{G}\to \{-1,1\}$, which is based on a similarity function $K:\mathcal{G}\times\mathcal{G}\to [-1,1]$. If $K$ is symmetric and positive semi-definite (PSD), it is called a valid kernel.

There are a number of proposed kernels on graphs, see \cite{kashima2003marginalized,borgwardt2005shortest,shervashidze2009fast}. 
In general, they are defined based on $\mathcal{R}$-Convolution framework \cite{haussler1999convolution}, that is, each graph $g\in\mathcal{G}$ is decomposed into substructures,
and a kernel value $K(g,g^\prime)$ is defined as a sum of pairwise similarities between their substructures. In fact, many graph kernels can be considered as instances of the $\mathcal{R}$-Convolution framework under different decomposition into substructures. The substructures can be random walks \cite{kashima2003marginalized}, shortest paths \cite{borgwardt2005shortest} or circle subtrees \cite{shervashidze2009fast}. Among these, Weisfeiler-Lehman (WL) subtree kernels \cite{shervashidze2009fast} and its variants have been shown to achieve great performance for the graph classification tasks. In this work, we focus on WL based kernels
and will review them in the following subsections.
\subsection{Weisfeiler-Lehman (WL) scheme for node embeddings}
Weisfeiler-Lehman (WL) subtree kernels \cite{shervashidze2009fast} are based on an iterative colour
refinement (also known as WL labeling scheme) and have been shown to achieve
great performance for graph classification task.
For each node of a given graph, the WL labeling scheme creates a sequence of 
ordered strings by the aggregation of the labels of the node and
its neighbors; these strings are then hashed or indexed to produce 
compressed  updated node labels or new indices. If the iteration of the scheme 
is increased, these obtained labels represent increasingly broader 
neighborhood of each node. More specifically, for a graph $G=(V,E)$
with initial labels $\ell_{0}(v)$ for $v\in V$ and let $H$ be the number of
WL iterations, we can define a sequence of refined labels
$(\ell_{0},\ell_{1},...,\ell_{H})$, where $\ell_{h+1}$ is obtained from $\ell_{h}$ by the
following procedure: $\ell_{h+1}(v)=\mathrm{hash}(\ell_{h}(v), \mathcal{N}_{h}(v))$, where $\mathcal{N}_{h}(v)$ denotes a lexicographically sorted sequence of labels of v's neighbors at iteration $h$ and the $\mathrm{hash}$ function is to create a updated compressed node label for $v$. We use perfect hashing for the $\text{hash}$ function, as in \cite{shervashidze2009fast}, ensuring two nodes at iteration $h+1$ have the same label if and only if their label and those of their neighbors at iteration $h$ are the same. Throughout the rest of paper, we denote the set of WL labels at iteration $h$ as $\Sigma^{h}$, for $h=1,...,H$, and the set of all WL labels as $\Sigma=\cup_{h=1}^{H}\Sigma^{h}$.
The WL labeling scheme is illustrated in Figure \ref{graphkernels} (a).

Based on WL labeling scheme, a WL node embedding scheme
was proposed to generate node embeddings from node labels of the graphs \cite{shervashidze2009fast}.
\begin{defn}
(WL based node embedding scheme, \cite{shervashidze2009fast}). Let $G=(V,E)$ and let $H$ be the number of
WL iterations. For every $h\in \{1,...,H\}$, 
we define the node embedding $x^{h}(v)$ of a node $v\in V$ and graph embedding $\textbf{X}^{h}(G)$ of $G$ at iteration $h$ as follows:
\begin{equation}
    x^{h}(v) = \ell^{h}(v) \text{, }
    \textbf{X}^{h}(G)=[x^{h}(v_{1}),...,x^{h}(v_{n_{V}})]^{T}
\end{equation}
Then the graph embedding of $G$ can be defined as:
\begin{align*}
    f^{H}(G): \mathcal{G}&\mapsto \Sigma^{n_{V}\times H}\\
    G &\mapsto \left[ \textbf{X}^{1}(G)^{T},...,\textbf{X}^{H}(G)^{T} \right]^{T}
\end{align*}
\end{defn}
\noindent
where $n_{V}$ is the number of nodes in $G$.
With the WL node embedding scheme above, we are ready to introduce some notations which will be used throughout
the rest of paper.\\

\noindent
\textbf{Notations:} Let $D_{h}^{\text{Ham}}(f^{h}(G), f^{h}(G^\prime))$ be the Hamming distance matrix
where each entry is the \emph{normalized Hamming distance} between the corresponding node embeddings of $G$ and $G^\prime$ at iteration $h$, defined as:
\begin{equation}
\label{nHamming}
d_{h}^{\text{Ham}}(u,v) = \frac{1}{h}\sum_{i=1}^{h}d_{i}^{\text{disc}}(u,v) \text{,  } d_{i}^{\text{disc}}(u,v)=
\begin{cases}
  0 & \text{if  } u_{i}=v_{i} \\
  1 & \text{otherwise}
\end{cases}
\end{equation}
where $u$ and $v$ denote two node embeddings, $d_{h}^{\text{Ham}}(u,v)$ denotes the normalized Hamming distance between $u$ and $v$ at iteration $h$ and $d_{i}^{\text{disc}}(u,v)$ denotes the \emph{discrete distance} between $u$ and $v$ at iteration $i$.
Similarly, let $D^{\text{Disc}}_{h}(\textbf{X}^{h}(G), \textbf{X}^{h}(G^\prime))$ be the discrete distance matrix where
each entry is the discrete distance between the corresponding node embeddings of $G$ and $G^\prime$ at iteration $h$. It is easy to see that $[D^{\text{Ham}}_{h}]_{ij}\in [0,1]$
and $[D^{\text{Disc}}_{h}]_{ij}\in \{0,1\}$. We also define a
base kernel which corresponds to the averaged number of feature shared by 
two node embeddings as: 
\begin{equation}
\label{Eqn:basekernel}
k_{h}(u,v) = \frac{1}{h}\sum_{i=1}^{h} \mathbb{1}(u_{i}=v_{i})
\end{equation}
It is easy to see that $k_{h}(u,v)=1- d_{h}^{\text{Ham}}(u,v)$. Thanks to WL node embedding scheme, a graph can be represented as a point cloud or a set of node embeddings. Measuring the similarity (or dissimilarity) between two graphs boils down to measuring the similarity (or dissimilarity) between two sets of embeddings. Each node embedding captures information about the neighborhood of the corresponding node (or a rooted \textit{subtree pattern}). In the following subsections, we will review different WL based graph kernels derived from different ways of comparing their sets of WL node embeddings.
\subsection{WL subtree kernels}
\label{WLKernel}
WL subtree kernels \cite{shervashidze2009fast} simply employ the aforementioned WL labeling scheme to extract subtree patterns, which represent the neighborhood of each node in the graph up to a given distance (or number of hops $H$). Essentially WL subtree kernel counts the number
of common WL labels. In the context of WL node embedding scheme,
it can be computed by summing all pairwise similarities between node embeddings of two graphs. 
More formally, for two graphs $G$ and $G^\prime$ with two sets of node embeddings 
$f^{H}(G)$ and $f^{H}(G^\prime)$, respectively, the WL subtree kernel value between them can be defined as:
\begin{equation}
\textbf{K}^{\text{WL}}(G,G^\prime) = \sum_{\textbf{x}\in f^{H}(G)}\sum_{\textbf{y}\in f^{H}(G^\prime)}k_{H}(\textbf{x},\textbf{y})
\end{equation}
It is obvious that as the base kernel $k_{H}(\textbf{x},\textbf{y})$ (defined in Eq. (\ref{Eqn:basekernel})) is equal to the number of WL labels (subtree patterns) shared by two node embeddings, the kernel $\textbf{K}^{\text{WL}}(G,G^\prime)$ is equal to the total number of WL subtree patterns shared by two graphs.
\subsection{WL-based optimal assignment kernels}
\label{OAKernel}
Optimal assignments are natural measures of similarity between two sets of points.  In particular, for two sets of points, the goal is to assign one point of one set to another point of the other set (one-to-one correspondence) such that the sum of similarities between  assigned points is maximized. Finding such an optimal alignment or bijection is also known as the well-studied assignment problem in combinatorial optimization \cite{munkres1957algorithms}. However, a challenge is how to design a valid kernel based on optimal assignments.

Kriege \textit{et al} \cite{kriege2016valid} introduced a restricted class of kernels, called \emph{strong kernels}, that guarantees the construction of valid optimal assignment based kernels. An important result is that the strong kernels give rise to hierarchies defined on the domain of kernels. Based on this, the authors proposed WL-based optimal assignment (WL-OA) kernels with the WL node embedding scheme. WL-OA kernels employ the base kernel, defined in Eq. (\ref{Eqn:basekernel}), which satisfies the requirement of being a strong kernel as the sequence of refined WL labels $(\ell_{0},\ell_{1},...,\ell_{H})$ gives rise to a family of nested subsets, which can be represented by a hierarchy. Consequently, WL-OA kernels are valid. Formally, given two graphs $G$ and $G^\prime$ with two sets of node embeddings $f^{H}(G)$ and $f^{H}(G^\prime)$, respectively, the WL-OA kernel value between them is defined as:
\begin{equation}
\label{Eqn:OAKernel}
\textbf{K}^{\text{OA}}(G,G^\prime) = \max_{B\in \mathcal{B}(f^{H}(G),f^{H}(G^\prime))}\sum_{(x,y)\in B}k_{H}(\textbf{x},\textbf{y})
\end{equation}
where $\mathcal{B}(f^{H}(G),f^{H}(G^\prime))$ is the set of all bijections between two sets $f^{H}(G)$ and $f^{H}(G^\prime)$. 
To apply this kernel to graphs of different number of nodes, we can fill up the graph with smaller number of nodes, says $f^{H}(G^\prime)$, by new node embeddings $z$ with $k(\textbf{x},\textbf{z})=0$ for all $\textbf{x}\in f^{H}(G)$ without changing the result.
It is worth noting that the WL-OA kernels take the similarities of aligned node embeddings into account, while the WL subtree kernels consider all pairwise similarities.

\begin{figure}[t]

	\includegraphics[width=1.0\columnwidth]{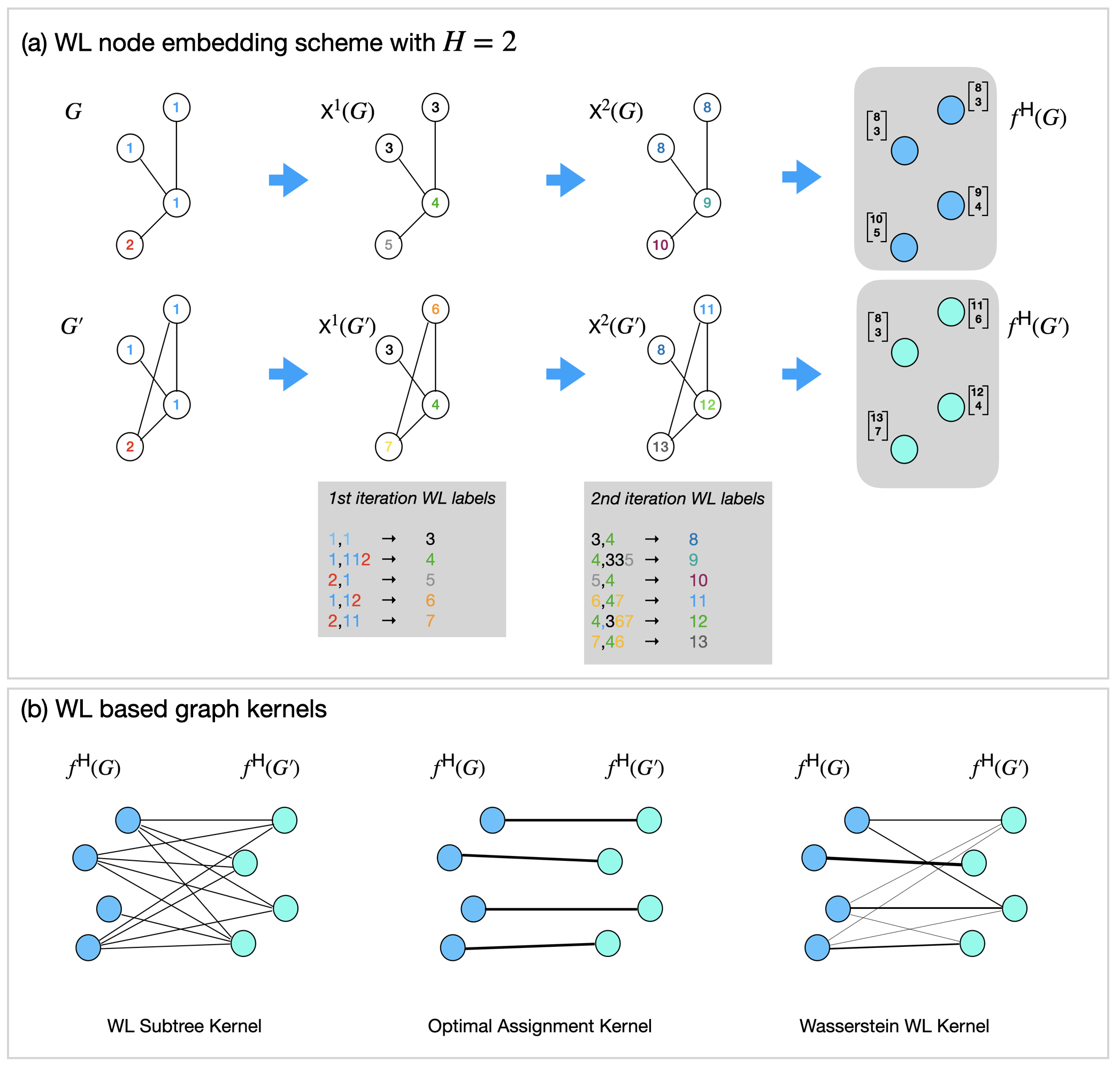}
	\caption{(a) Illustration of Weisfeiler-Lehman (WL) node embedding scheme with two iterations ($H=2$). (b) Illustration of different WL based graph kernels: WL Subtree kernel is computed by considering all pairwise similarities between node embeddings of two graphs; WL optimal assignment kernel takes  the similarities of only aligned node embeddings into account; For Wasserstein WL kernel, one node embedding of one graph can be coupled with multiple node embeddings of the other graph for computing the kernel.} 
	\label{fig:comparison}
\label{graphkernels}
\end{figure}

\subsection{Wasserstein WL kernels}
\label{WWLKernel}
Optimal transport (OT), also known as Wasserstein distance function \cite{villani2008optimal}, has gained much
attraction in machine learning community as a powerful tool for the comparsion
of two probability distributions. The naive computation of this distance between
two discrete measures, e.g. point clouds, involves solving transport problem. Formally, let $X=\{\textbf{x}_{1},...,\textbf{x}_{m}\}$ and $Y=\{\textbf{y}_{1},...,\textbf{y}_{n}\}$ be two sets of points, where $m$ and $n$ denote the size of two sets $X$ and $Y$, respectively; $\textbf{p}\in\mathbb{R}^{m}$ and $\textbf{q}\in\mathbb{R}^{n}$ are two discrete probability distributions over $X$ and $Y$, respectively. We use $d_{ij}$ to denote the distance between $\textbf{x}_{i}$ and $\textbf{y}_{j}$, e.g. the squared Euclidean distance $d_{ij}=\left\Vert \textbf{x}_{i}-\textbf{y}_{j}\right\Vert^{2}_{2}$. The Wasserstein distance is formulated as a linear program over the \emph{transportation matrix} (or joint probability) $\textbf{P}\in\mathbb{R}^{m\times n}$:

\begin{equation}
\label{Eqn:OT}
\begin{aligned}
\mathcal{W}_{1}(X,Y, d) =& \min_{\textbf{P}_{ij}}\sum_{i=1}^{m}\sum_{j=1}^{n}\textbf{P}_{ij}d_{ij}\\
\text{subject to } & \sum_{i=1}^{m}\textbf{P}_{ij}=\textbf{q}_{j}\text{, } \forall j\in [1,n] \\
& \sum_{j=1}^{n}\textbf{P}_{ij}=\textbf{p}_{i}\text{, } \forall i\in [1,m] \\
\end{aligned}
\end{equation}

With WL node embedding scheme to generate node embeddings for graphs, Togninalli \textit{et al} \cite{togninalli2019wasserstein} evaluated the pairwise Wasserstein distance between graphs with the normalised Hamming (\ref{nHamming}) as the ground distance. Then, Wasserstein WL (WWL) kernel is defined as an instance of Laplacian kernels, see Eq. (\ref{Eqn:WWLkernel}).
\begin{equation}
\label{Eqn:WWLkernel}
\begin{aligned}
\textbf{D}^{\text{WWL}}(G,G^\prime)&=\mathcal{W}_{1}(f^{H}(G),f^{H}(G^\prime), d^{\text{Ham}}_{H})\\
\textbf{K}^{\text{WWL}}(G,G^\prime)&=e^{-\gamma\textbf{D}^{\text{WWL}}(G,G^\prime)}
\end{aligned}
\end{equation}
where $\gamma$ is a hyperparameter.

In general cases, it is not necessarily possible to derive a valid kernel
from the Wasserstein distance. However, thanks to the special property of WL labels and normalized Hamming distance (\ref{nHamming}), $\textbf{D}^{\text{WWL}}$ was shown to be conditionally negative definite (CND), resulting in the validity of $\textbf{K}^{\text{WWL}}$, by proving the following lemma (see \cite{togninalli2019wasserstein} for its proof):
\begin{lem}
\label{Lemma1}
If a transportation matrix $P^{H}$ is optimal solution of (\ref{Eqn:OT}) with the ground distance $d_{H}^{\text{Ham}}$ (\ref{nHamming}) between node embeddings at iteration $H$, then we have the two following claims:
\begin{enumerate}
    \item $P^{H}$ is also optimal solution of (\ref{Eqn:OT}) with the discrete distance $d_{H}^{\text{Disc}}$ between $H$-iteration values.
    \item $P^{H}$ is also optimal solution of (\ref{Eqn:OT}) with the normalised Hamming distance $d_{H-1}^{\text{Ham}}$ between node embeddings at iteration $H-1$.
\end{enumerate}
\end{lem}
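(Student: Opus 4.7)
The proof plan hinges on the linear decomposition
\[d_H^{\text{Ham}}(u,v) \;=\; \frac{H-1}{H}\,d_{H-1}^{\text{Ham}}(u,v) \;+\; \frac{1}{H}\,d_H^{\text{Disc}}(u,v),\]
which follows immediately from the definition of the normalised Hamming distance. Since the OT objective is linear in the ground cost, for every feasible plan $P$ we get the same convex combination at the level of transport costs. Writing $W^{\text{Ham}}_H,\,W^{\text{Ham}}_{H-1},\,W^{\text{Disc}}_H$ for the three OT optima between $f^H(G)$ and $f^H(G^\prime)$, minimising over $P$ then yields the weak bound $W^{\text{Ham}}_H \ge \frac{H-1}{H}\,W^{\text{Ham}}_{H-1} + \frac{1}{H}\,W^{\text{Disc}}_H$. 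If this inequality is actually an equality, then any $P^H$ attaining $W^{\text{Ham}}_H$ must, by nonnegativity of each constituent cost, be tight on both simultaneously; that is exactly claims~(2) and~(1).

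The whole task thus reduces to exhibiting one plan $P^*$ that is simultaneously optimal for $d_{H-1}^{\text{Ham}}$ and $d_H^{\text{Disc}}$. I would construct $P^*$ greedily along the WL hierarchy: first match as much mass as possible between nodes of $G$ and $G^\prime$ that share the same iteration-$H$ label; then, restricted to each iteration-$(H-1)$ label class, match as much of the residual mass as possible at iteration $H-1$; continue down to iteration $1$. The WL refinement property that $\ell_{h+1}(u)=\ell_{h+1}(v)$ forces $\ell_h(u)=\ell_h(v)$ makes this well defined: iteration-$H$ classes refine iteration-$h$ classes for every $h<H$, so any pairing made at a finer level remains admissible at every coarser level, and $P^*$ is just a nested sequence of one-dimensional discrete transports.

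The technical core is verifying that $P^*$ attains the OT optimum for every single-step discrete cost $d_i^{\text{disc}}$, $1\le i\le H$; this optimum equals the total variation distance between the iteration-$i$ label histograms of $G$ and $G^\prime$. Inducting on $h$ from $H$ down to $1$ and using the identity, valid for any nonnegative reals $a_s,b_s$,
\[\min\!\Bigl(\sum_{s\in S_t}a_s,\;\sum_{s\in S_t}b_s\Bigr) \;=\; \sum_{s\in S_t}\min(a_s,b_s) \;+\; \min\!\Bigl(\sum_{s\in S_t}(a_s-b_s)_+,\;\sum_{s\in S_t}(b_s-a_s)_+\Bigr),\]
where $S_t$ indexes the iteration-$(h+1)$ labels refining a fixed iteration-$h$ label $t$, one checks that the mass $P^*$ matches at the finer level plus the mass it matches at the coarser level together equal the coarser level's maximum matchable mass. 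Hence $P^*$ attains every $d_i^{\text{disc}}$ optimum, and by linearity also the optimum of $d_{H-1}^{\text{Ham}}=\tfrac{1}{H-1}\sum_{i=1}^{H-1}d_i^{\text{disc}}$ and of $d_H^{\text{Disc}}=d_H^{\text{disc}}$. This turns the inequality in the first paragraph into an equality and yields both claims. The main obstacle is the hierarchical combinatorial bookkeeping in this last step; everything else rides on linearity of OT in the ground cost.
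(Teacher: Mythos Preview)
The paper does not prove this lemma itself; it states the result and defers to \cite{togninalli2019wasserstein} for the proof, so there is no in-paper argument to compare against.

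On its own merits your proposal is sound. The decomposition $d_H^{\text{Ham}} = \tfrac{H-1}{H}\,d_{H-1}^{\text{Ham}} + \tfrac{1}{H}\,d_H^{\text{Disc}}$ together with linearity of the OT objective is exactly the right reduction: it turns both claims into the existence of a single plan $P^*$ simultaneously optimal for $d_{H-1}^{\text{Ham}}$ and $d_H^{\text{Disc}}$. Your greedy hierarchical construction of $P^*$ exploits precisely the WL refinement property, and the identity you quote for $\min$ of sums is the correct bookkeeping device: it shows that the mass matched at iteration $h{+}1$ plus the residual matched at iteration $h$ equals the maximum matchable mass at iteration $h$, so by induction $P^*$ attains the discrete-cost optimum at every level $1\le i\le H$, hence also for any convex combination of those costs.

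Two minor points worth tightening. First, the phrase ``by nonnegativity of each constituent cost'' is not the right justification for why equality forces $P^H$ to be tight on both pieces; the correct reason is simply that each term $\langle P^H, D_{H-1}^{\text{Ham}}\rangle$ and $\langle P^H, D_H^{\text{Disc}}\rangle$ is bounded below by its own OT optimum, and a positively weighted sum can meet the sum of lower bounds only if each summand meets its bound. Second, state explicitly what happens to the mass still unmatched after the level-$1$ step (route it arbitrarily); this residual incurs cost $1$ at every discrete level and does not affect optimality, but the plan is not fully specified until you say so.
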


\noindent
Let $P^{*}$ be the optimal solution of (\ref{Eqn:OT}) for $D^{\text{Ham}}_{H}$. From the above lemma, it is also the optimal solution for $D^{\text{Disc}}_{h}$, $h=1,..,H$. 
The Wasserstein distance between two graphs $G$ and $G^\prime$ in (\ref{Eqn:WWLkernel}) can be simplified as follows:
\begin{equation}
\label{Eqn:simpleWWLDist}
    \textbf{D}^{WWL}(G,G^\prime) = \frac{1}{H}\sum_{h=1}^{H}\mathcal{W}_{1}(\textbf{X}^{h}(G),\textbf{X}^{h}(G^\prime), d^{\text{Disc}}_{h})
\end{equation}

The Eq. (\ref{Eqn:simpleWWLDist}) is a sum of OT distances with the discrete distances as ground metrics, which are CND. Therefore, the sum is also CND, leading to the validity of the similarity matrix $\textbf{K}^{\text{WWL}}$.

\section{Incorporating subtree pattern importance into WL based graph kernels}
One of the main limitations of kernels is the decoupling of data representation and learning process, that is, the kernel must be predefined prior to learning, leading to limited predictive performance. Furthermore, in prediction tasks for molecular data, 
the output might be determined by the presence of a few important substructures, while these kernels contain all substructures with equal weights. Motivated by this drawback, in this paper we address the problem of incorporating subtree pattern weights for WWL kernel \cite{togninalli2019wasserstein}. To this end, we aim to learn new kernels from a parametric form of Wasserstein distance taking into account subtree pattern weights (\ref{Eqn:simpleWWLDist}), and learn these weights from data optimally for the task.
\subsection{Parametric form of Wasserstein distance with subtree pattern weights}
To derive a parametric form of the distance function (\ref{Eqn:simpleWWLDist}), we rely on the following simple
observation:
\begin{lem}
Let $S$ be a set of elements, $X=\{\textbf{x}_{1},...,\textbf{x}_{m}\}$ and $Y = \{\textbf{y}_{1},...,\textbf{y}_{n}\}$ ($X,Y \subseteq S$) be two multiset of $S$ of $m$ and $n$ samples, respectively, the Wasserstein distance between them with the discrete distance as the ground metric is determined by:
\begin{equation}
    \mathcal{W}_{1}(X,Y) = 1 - \sum_{v\in S} \min(\mu_{X}(v), \mu_{Y}(v))
\end{equation}
where $\mu_{X}(v)$ denotes the mass density function of the multiset $X$ with $v\in S$.
\end{lem}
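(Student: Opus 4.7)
The plan is to compute $\mathcal{W}_1(X,Y)$ directly from its linear programming definition by rewriting the objective in terms of the amount of mass that is transported between identical labels, and then proving matching upper and lower bounds.

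First, I would observe that since the ground cost is the discrete distance, $d_{ij} = 1 - \mathbb{1}[x_i = y_j]$. Because any feasible transport plan satisfies $\sum_{i,j} P_{ij} = 1$, the objective of (\ref{Eqn:OT}) can be rewritten as
\begin{equation*}
\sum_{i=1}^{m}\sum_{j=1}^{n} P_{ij}\, d_{ij} \;=\; 1 \;-\; \sum_{v\in S} M_P(v), \qquad M_P(v) \,:=\, \sum_{i:\, x_i=v}\,\sum_{j:\, y_j=v} P_{ij},
\end{equation*}
so minimizing the cost is equivalent to maximizing the total mass $\sum_v M_P(v)$ that stays on identical labels.

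Next, I would derive the lower bound by applying the marginal constraints to $M_P(v)$. For any feasible $P$ and any $v\in S$, summing over $j$ freely in the definition of $M_P(v)$ and using $\sum_j P_{ij} = p_i$ gives $M_P(v) \le \sum_{i:\,x_i=v} p_i = \mu_X(v)$, and symmetrically $M_P(v) \le \mu_Y(v)$. Hence $M_P(v) \le \min(\mu_X(v), \mu_Y(v))$ coordinatewise, which immediately yields
\begin{equation*}
\mathcal{W}_1(X,Y) \;\ge\; 1 - \sum_{v\in S} \min\bigl(\mu_X(v),\, \mu_Y(v)\bigr).
\end{equation*}

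Finally, to prove this bound is tight I would construct an explicit plan $P^\star$ attaining it. For every $v\in S$, distribute a total mass of $\min(\mu_X(v), \mu_Y(v))$ among the pairs $(i,j)$ with $x_i = y_j = v$ in any way consistent with the partial marginals (e.g. uniformly on $I_v \times J_v$); this part of the plan incurs zero cost. The residual row-marginal at index $i$ is $p_i - (\text{mass already assigned})$, and analogously for columns; both residual distributions have total mass $1 - \sum_v \min(\mu_X(v),\mu_Y(v))$, so any coupling between them (for instance the independent coupling rescaled to the residual marginals) completes $P^\star$ to a feasible plan whose remaining cost is exactly $1 - \sum_v \min(\mu_X(v), \mu_Y(v))$. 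Combined with the lower bound, this gives the claimed equality. The only subtle point, and the one I would be most careful about, is verifying that the ``match equal labels first'' construction leaves residual row and column marginals that are nonnegative and share the same total mass; both facts follow immediately from the definition of $\min$ and the fact that $\mu_X$ and $\mu_Y$ are probability distributions, so no real obstacle arises.
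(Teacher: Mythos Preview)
Your proof is correct. The paper itself does not supply a proof of this lemma; it is introduced only as a ``simple observation'' and then immediately applied to rewrite $\textbf{D}^{\text{WWL}}$, so there is nothing to compare against. Your argument---rewriting the discrete-cost objective as $1-\sum_v M_P(v)$, bounding each $M_P(v)$ by $\min(\mu_X(v),\mu_Y(v))$ via the marginal constraints, and then exhibiting a feasible plan that matches equal labels first and couples the residuals arbitrarily---is the standard and complete justification, and the care you take about the residual marginals being nonnegative with equal total mass is exactly the right checkpoint.
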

\noindent
Applying this lemma to Eq. (\ref{Eqn:simpleWWLDist}), we have:
\begin{equation}
\label{Eqn:newform}
\textbf{D}^{\text{WWL}}(G, G^\prime)=1- 
\frac{1}{H}\sum_{h=1}^{H}\sum_{v\in \Sigma^{h}}\min(\mu_{\textbf{X}_{h}(G)}(v), \mu_{\textbf{X}_{h}(G^\prime)}(v))
\end{equation}

Our idea is to give each substructure or WL label $v$ a nonnegative weight $w_{v}\in \mathbb{R}_{\geq 0}$ for its importance to the problem, so the parametric form of Eq. (\ref{Eqn:newform}) is defined as follows:
\begin{equation}
b- 
\frac{1}{H}\sum_{h=1}^{H}\langle\textbf{w}_{h},\textbf{z}_{h}\left(G,G^\prime\right)\rangle
\end{equation}
where $\textbf{w}_{h}$, $\textbf{z}_{h}\left(G,G^\prime\right)\in\mathbb{R}^{|\Sigma^{h}|}$ are the vectors of entries $w_{v}$ and $\min(\mu_{\textbf{X}_{h}(G)}(v), \mu_{\textbf{X}_{h}(G^\prime)}(v))$, respectively, for $v\in \Sigma^{h}$;
$b$ is a constant to ensure that the value of parametric function is nonnegative. In vector form, this can be expressed as:
\begin{equation}
\label{Eqn:linearform}
d_{\textbf{W}}(G,G^\prime)=b- 
\langle\textbf{W}, \textbf{Z}\left(G,G^\prime\right)\rangle
\end{equation}
where 
$
\begin{cases}
  \textbf{Z}\left(G,G^\prime\right)=\frac{1}{H}\left[ \textbf{z}_{1}\left(G,G^\prime\right)^{T},...,\textbf{z}_{H}\left(G,G^\prime\right)^{T} \right]^{T} \\
  \textbf{W}=\left[ \textbf{w}_{1}^{T},...,\textbf{w}_{H}^{T} \right]^{T}
\end{cases}$\\
The parametric form (\ref{Eqn:linearform}) is a linear function with respect to the parameter vector $\textbf{W}\in \mathbb{R}^{d}$ ($d=|\Sigma^{1}|+...+|\Sigma^{H}|$) and $\textbf{Z}\left(G,G^\prime\right)$ is considered as a feature vector
of a pair of graphs $G$ and $G^\prime$. Once the parameters are estimated, we can derive a similarity matrix through the Laplacian kernel as in Eq. (\ref{Eqn:WWLkernel}). More importantly, as $\textbf{W}$ is nonnegative, it is easy to see that $d_{\textbf{W}}$ is a CND function, and thus the derived similarity matrix is valid.
\subsection{Formulation of learning subtree pattern weights $\textbf{W}$}
We aim to learn the parameters $W$ in Eq. (\ref{Eqn:linearform}) using the notions of metric learning \cite{kulis2012metric}. That is two input graphs
with the same labels are encouraged to be closer while the two with different labels become far away from each other. In other words, within class distances should be small, while between class distances should be large. For this purpose, as a loss function for a graph pair $g$ and $g^\prime$, we can use the following two hinge loss function: $\max(0, \alpha_{1}-d_{\textbf{W}}(g, g^\prime))$ if $g$ and $g^\prime$ are with different labels and $\max(0, d_{\textbf{W}}(g, g^\prime)-\alpha_{2})$ otherwise, for learning subtree pattern weights, where $\alpha_{1}$ and $\alpha_{2}$ are constants ($\alpha_{1}\geq\alpha_{2}$). The former yields a penalty if $g$ and $g^\prime$ of different labels are closer than $\alpha_{1}$ while the latter yields a penalty when $g$ and $g^\prime$ of the same label are more distant than $\alpha_{2}$. 
Instead of using these functions in the optimization problem, we use their smooth versions: $V_{1}$ and $V_{2}$ (see Figure \ref{Fig:smoothfunctions}), which offer useful properties for deriving a generalization bound for the problem in Section 4. The derivation of these functions is based on the connection between the strong convexity of a function and Lipschitz continuous gradient of its Fenchel dual (see more details in \cite{nesterov2005smooth}). 

More concretely, let $D_{n} = \{z_{1}=(g_{1},y_{1}),...,z_{n}=(g_{n},y_{n})\}$ where $g_{i}\in \mathcal{G}$ and $y_{i}\in \mathcal{Y}=\{-1,1\}$, for $i=1,...,n$, we formulate a constrained minimization problem for learning subtree pattern weights as follows:

\begin{mini}|l|
  {\textbf{W}}{\frac{1}{n^{2}}\sum_{i=1}^{n}\sum_{j=1}^{n}\ell(\textbf{W}, z_{i}, z_{j})}{}{}
  \addConstraint{\textbf{W}\in \mathcal{C}}
  \label{Eqn:ell}
\end{mini}
where $\mathcal{C}=\{\textbf{W}\in\mathbb{R}^{d}:\textbf{W}=\left[ \textbf{w}_{1}^{T},...,\textbf{w}_{H}^{T} \right]^{T}, ||\textbf{w}_{h}-\textbf{c}_{h}||_{2}\leq \epsilon_{h}, 1\leq h \leq H \}$ ($\textbf{c}_{h}$ and $\epsilon_{h}$ are constant vectors and scalars); 
$\ell$ is a continuously differentiable function and defined as follows:

$\ell(\textbf{W}, z_{i}, z_{j})=\begin{cases}
  V_{1}(\textbf{W}, g_{i}, g_{j}) & \text{ if } y_{i}\neq y_{j}\\
  V_{2}(\textbf{W}, g_{i}, g_{j}) & \text{ otherwise}
\end{cases}$

\begin{equation}
\label{Eqn:V1}
    V_{1}(\textbf{W}, g_{i}, g_{j})=
    \begin{cases}
      0 & \text{ if }  d_{\textbf{W}}(g_{i}, g_{j}) \geq \alpha_{1}\\
      \alpha_{1} - \frac{\sigma}{2} -  d_{\textbf{W}}(g_{i}, g_{j})& \text{ if }  d_{\textbf{W}}(g_{i}, g_{j}) \leq \alpha_{1}-\sigma\\
      \frac{1}{2\sigma}\left( d_{\textbf{W}}(g_{i}, g_{j}) - \alpha_{1} \right)^{2} & \text{ if } \alpha_{1} - \sigma < d_{\textbf{W}}(g_{i}, g_{j}) < \alpha_{1}
    \end{cases}
\end{equation}

\begin{equation}
\label{Eqn:V2}
    V_{2}(\textbf{W}, g_{i}, g_{j})=
    \begin{cases}
      0 & \text{ if }  d_{\textbf{W}}(g_{i}, g_{j}) \leq \alpha_{2}\\
      d_{\textbf{W}}(g_{i}, g_{j}) - \alpha_{2} - \frac{\sigma}{2}& \text{ if }  d_{\textbf{W}}(g_{i}, g_{j}) \geq \alpha_{2}+\sigma\\
      \frac{1}{2\sigma}\left( d_{\textbf{W}}(g_{i}, g_{j}) - \alpha_{2} \right)^{2} & \text{ if } \alpha_{2} < d_{\textbf{W}}(g_{i}, g_{j}) < \alpha_{2}+\sigma
    \end{cases}
\end{equation}
and $\alpha_{1}$, $\alpha_{2} \left( \alpha_{1} \geq \alpha_{2} \right)$ and $\sigma$ are constants; $d_{\textbf{W}}(g_{i}, g_{j})$ is calculated from $\textbf{Z}\left(g_{i}, g_{j}\right)$ as in Eq. (\ref{Eqn:linearform}). To reduce notation, we use $\textbf{Z}_{i,j}$ rather than $\textbf{Z}\left(g_{i}, g_{j}\right)$ in the rest of the paper.
\begin{lem}
\label{propertiesofell}
Let $\beta=\max_{1\leq i<j\leq n} \left\Vert \textbf{Z}_{ij}\right\Vert_{2}$, $L=\frac{\beta^{2}}{\sigma}$,  and $M=\max(b-\frac{\sigma}{2}-\alpha_{2}, \alpha_{1}-\frac{\sigma}{2})$, the loss function $\ell$ defined in problem (\ref{Eqn:ell}) is $L$-lipschitz, $\beta$-smooth and upper bounded by $M$.
\end{lem}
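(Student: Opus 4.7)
The plan is to exploit the fact that $d_{\mathbf{W}}(g_i,g_j) = b - \langle \mathbf{W}, \mathbf{Z}_{ij}\rangle$ is affine in $\mathbf{W}$ with constant gradient $-\mathbf{Z}_{ij}$, and to analyze $V_1, V_2$ first as scalar functions of $d := d_{\mathbf{W}}(g_i,g_j)$. First I would check that the three pieces in (\ref{Eqn:V1}) (resp.\ (\ref{Eqn:V2})) match in value and in first derivative at the join points $d=\alpha_1$ and $d=\alpha_1-\sigma$ (resp.\ $d=\alpha_2$ and $d=\alpha_2+\sigma$), so that each $V_k$ is $C^1$ in $d$. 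A direct read-off of the pieces then yields $|V_k'(d)|\leq 1$ everywhere, and shows $V_k'$ is piecewise affine in $d$ with slope in $\{0,1/\sigma\}$, hence $(1/\sigma)$-Lipschitz in $d$.

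Pushing this through the chain rule, $\nabla_{\mathbf{W}} V_k(\mathbf{W}) = -V_k'(d_{\mathbf{W}})\,\mathbf{Z}_{ij}$, which immediately gives the gradient-norm bound
\begin{equation*}
\|\nabla_{\mathbf{W}}\ell(\mathbf{W},z_i,z_j)\|_2 \;\leq\; |V_k'(d_{\mathbf{W}})|\,\|\mathbf{Z}_{ij}\|_2 \;\leq\; \|\mathbf{Z}_{ij}\|_2 \;\leq\; \beta,
\end{equation*}
establishing the $\beta$ estimate. For the gradient-Lipschitz estimate, for any $\mathbf{W}_1,\mathbf{W}_2\in\mathcal{C}$, writing $d_r = b-\langle \mathbf{W}_r,\mathbf{Z}_{ij}\rangle$, I would chain the $(1/\sigma)$-Lipschitz-in-$d$ property of $V_k'$ with $|d_1-d_2|\leq \|\mathbf{Z}_{ij}\|_2\|\mathbf{W}_1-\mathbf{W}_2\|_2$ to get
\begin{equation*}
\|\nabla_{\mathbf{W}}\ell(\mathbf{W}_1)-\nabla_{\mathbf{W}}\ell(\mathbf{W}_2)\|_2 \;\leq\; \tfrac{1}{\sigma}|d_1-d_2|\,\|\mathbf{Z}_{ij}\|_2 \;\leq\; \tfrac{\|\mathbf{Z}_{ij}\|_2^{2}}{\sigma}\,\|\mathbf{W}_1-\mathbf{W}_2\|_2 \;\leq\; L\,\|\mathbf{W}_1-\mathbf{W}_2\|_2,
\end{equation*}
which is the $L$ estimate.

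For the uniform upper bound $M$, I would argue that because the subtree-pattern weights $w_v$ are nonnegative and the $\min$-components making up $\mathbf{Z}_{ij}$ are nonnegative, $\langle \mathbf{W},\mathbf{Z}_{ij}\rangle\geq 0$ and hence $d_{\mathbf{W}}\leq b$; by construction the constant $b$ is chosen (as noted after (\ref{Eqn:linearform})) so that $d_{\mathbf{W}}\geq 0$. Restricting to $d\in[0,b]$, the decreasing function $V_1$ is maximized on its linear piece $\alpha_1-\sigma/2-d$ at $d=0$, giving $V_1\leq \alpha_1-\sigma/2$; and the increasing function $V_2$ is maximized on its linear piece $d-\alpha_2-\sigma/2$ at $d=b$, giving $V_2\leq b-\alpha_2-\sigma/2$. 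Combining yields $\ell\leq \max(\alpha_1-\sigma/2,\ b-\sigma/2-\alpha_2) = M$.

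The main obstacle is not any single hard estimate but the bookkeeping at the piecewise joins: one must verify value/derivative matching at all four join points so that the chain rule applies unambiguously, ensure that the bounds $|V_k'|\leq 1$ and the $(1/\sigma)$-Lipschitz bound on $V_k'$ hold uniformly (including at the joints, where the second derivative is discontinuous but still bounded by $1/\sigma$), and carefully justify the domain bound $0\leq d_{\mathbf{W}}\leq b$ that underpins $M$ — failing any of these would break the chain-rule step or loosen the stated constants.
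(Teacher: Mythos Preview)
Your proposal is correct and mirrors the paper's own proof: both bound $\|\nabla V_k\|_2$ by $\|\mathbf{Z}_{ij}\|_2\leq\beta$, bound the second-order variation by $\tfrac{1}{\sigma}\|\mathbf{Z}_{ij}\|_2^{2}\leq L$, and read off the $M$-bound from $0\leq d_{\mathbf{W}}\leq b$. The only difference is presentational---the paper writes out the Hessian $\tfrac{1}{\sigma}\mathbf{Z}_{ij}\mathbf{Z}_{ij}^{T}$ explicitly and bounds its spectral norm, whereas you work through the scalar derivative $V_k'(d)$ and its $(1/\sigma)$-Lipschitz property, which is slightly cleaner at the piecewise join points where the Hessian is discontinuous.
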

\begin{proof} First, it is obvious to see that as $0\leq d_{\textbf{W}}\leq b$, we have the following bounds: $0\leq V_{1}\leq \alpha_{1}-\frac{\sigma}{2}$ and $0\leq V_{2}\leq b-\frac{\sigma}{2}-\alpha_{2}$. Therefore, $\ell$ is upper bounded by $M$.

We derive the first and second order derivatives of the smooth function $V_{1}$ as follows:
\begin{equation}
\label{Eqn:1ordV1}
    \nabla V_{1}(\textbf{W}, g_{i}, g_{j})=
    \begin{cases}
      0 & \text{ if }  d_{\textbf{W}}(g_{i}, g_{j}) \geq \alpha_{1}\\
     \textbf{Z}_{ij}& \text{ if }  d_{\textbf{W}}(g_{i}, g_{j}) \leq \alpha_{1}-\sigma\\
      \frac{1}{\sigma}\left( \alpha_{1}- d_{\textbf{W}}(g_{i}, g_{j})  \right)\textbf{Z}_{ij} & \text{ if } \alpha_{1} - \sigma < d_{\textbf{W}}(g_{i}, g_{j}) < \alpha_{1}
    \end{cases}
\end{equation}
\begin{equation}
\label{Eqn:2ordV1}
    \nabla^{2} V_{1}(\textbf{W}, g_{i}, g_{j})=
    \begin{cases}
      0 & \text{ if }  d_{\textbf{W}}(g_{i}, g_{j}) \geq \alpha_{1}\\
      0& \text{ if }  d_{\textbf{W}}(g_{i}, g_{j}) \leq \alpha_{1}-\sigma\\
     \frac{1}{\sigma}\textbf{Z}_{ij}\textbf{Z}_{ij}^{T} & \text{ if } \alpha_{1} - \sigma < d_{\textbf{W}}(g_{i}, g_{j}) < \alpha_{1}
    \end{cases}
\end{equation}
In order to prove the function $V_{1}$ is $L$-Lipschitz and $\beta$-smooth, it is sufficient to show that the norm of its derivative is always less than $L$: $\left\Vert\nabla V_{1}(\textbf{W},g_{i}, g_{j})\right\Vert_{2}\leq L$ and the spectral norm (or the maximum eigen value) of its second order derivative is always less than $\beta$: $||\nabla^{2} V_{1}(\textbf{W}, g_{i}, g_{j})||\leq \beta\text{, } \forall g_{i}, g_{j}\in\mathcal{G}$. Indeed, from Eq. (\ref{Eqn:1ordV1}) and (\ref{Eqn:2ordV1}), we have: $\left\Vert V_{1}(\textbf{W},g_{i},g_{j})\right\Vert_{2}\leq \left\Vert \textbf{Z}_{ij}\right\Vert_{2}$ and $ \left\Vert \nabla^{2}V_{1}(\textbf{W}, g_{i}, g_{j})\right\Vert\leq \frac{1}{\sigma}\left\Vert \textbf{Z}_{ij}\textbf{Z}_{ij}^{T}\right\Vert=\frac{1}{\sigma}\left\Vert \textbf{Z}_{ij}\right\Vert_{2}^{2}$. Also, we can bound $\left\Vert \textbf{Z}_{ij}\right\Vert_{2}$ by the  inequality $\left\Vert \textbf{Z}_{ij}\right\Vert_{2}\leq \beta$. Thus $V_{1}$ is $\beta$-smooth and $L$-Lipschitz. Similarly, we can also show that $V_{2}$ is $\beta$-smooth and $L$-Lipschitz. The lemma is proven.
\end{proof}

\begin{figure}[t]
    
	\includegraphics[width=1.0\columnwidth]{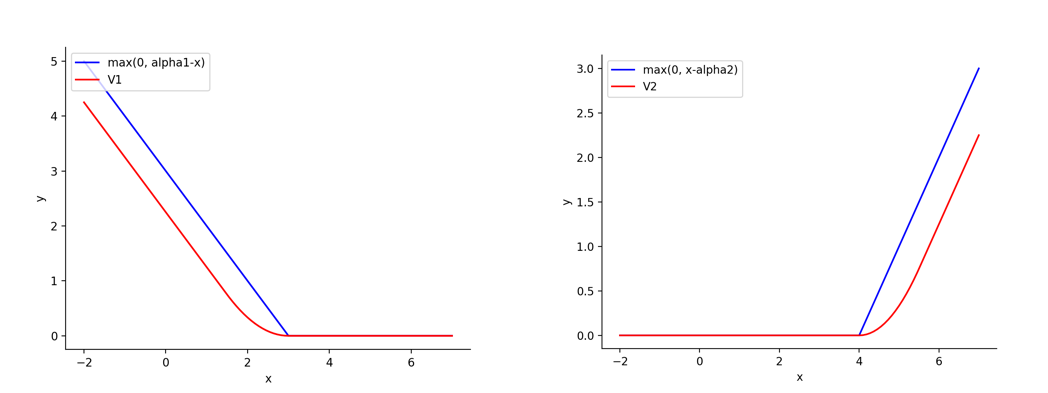}
	\caption{Illustration of Hinge loss $\max(0, \alpha_{1}-x)$ and $\max(0, x-\alpha_{2})$; and their smooth versions: $V_{1}$ (left) and $V_{2}$ (right), respectively, with $\alpha_{1}=3, \alpha_{2}=4$ and $\sigma=1.5$.} 
	\label{Fig:smoothfunctions}
\end{figure}

\subsection{A stochastic learning algorithm for constrained optimization}

The constrained optimization problem (\ref{Eqn:ell}) is convex and thus guarantees to find its global optimum. Standard methods such as projected gradient descent can be used to solve the problem (\ref{Eqn:ell}). However, for large scale data sets, solving the problem (\ref{Eqn:ell}), involving $n^{2}$ terms with $d$ parameters, might be computationally expensive. For instance, the data set PROTEIN (see Table 1) has $n^{2}> 10^6$ pairs of examples and the weight vector size of $d> 10^5$ with the number of WL iterations $H=5$. In this subsection, we present an efficient stochastic learning algorithm for dealing with this issue.

\begin{algorithm}[H]
\label{pSGD}
\SetAlgoLined
\KwIn{$D_{n}=\{z_{1},...,z_{n}\}$, $c$, $\mu$: learning rate and $T$:\#maxIters}
\KwOut{\text{solution }$\textbf{W}^{\star}$}
 $\textbf{W}^{(0)}=\textbf{1}_{d}$, $t=0$\;
 \While{$t\leq T$}{
  randomly pick two examples $z=(g,y)$ and $z^\prime=(g^\prime,y^\prime)$ from $D_{n}$\;
  \eIf{$y = y^\prime$}{
   $\mathrm{grad}^{(t)}=\nabla V_{1}(\textbf{W}^{(t)},g,g^\prime)$\;
   }{
   $\mathrm{grad}^{(t)}=\nabla V_{2}(\textbf{W}^{(t)},g,g^\prime)$\;
  }
  $\textbf{W}^{(t+1)}=\mathrm{proj}_{\mathcal{C}}[\textbf{W}^{(t)}-\mu \mathrm{grad}^{(t)}]$\;
 }
 $\textbf{W}^{\star}=\textbf{W}^{(T)}$\;
 \caption{A stochastic algorithm for learning $\textbf{W}$}
\end{algorithm}

Let $\textbf{W}^{(t)}$ denote the weight at iteration $t$. The weight is initialized by a vector of ones: $\textbf{W}^{(0)}=\textbf{1}_{d}$, which is also the case of WWL kernel without learning subtree pattern importance.
At each iteration $t$, we randomly pick up a pair of examples ($z = (g,y),z^\prime=(g^\prime,y^\prime)$) from the training data set $D_{n}$ and compute the gradient $\mathrm{grad}^{(t)}$ corresponding to this pair. In fact this step can be done efficiently due to the sparsity of the feature vector $\textbf{Z}\left(g,g^\prime\right)$ in Eq. (\ref{Eqn:linearform}). Then, we update the current solution $\textbf{W}^{(t)}$ to $\textbf{W}^{(t+1)}$ by the following rule:
\begin{equation*}
    \textbf{W}^{(t+1)}=\mathrm{proj}_{\mathcal{C}}[\textbf{W}^{(t)}-\mu  \mathrm{grad}^{(t)}]
\end{equation*}
where $\mathrm{proj}_{\mathcal{C}}[\textbf{W}]_{h}=
\begin{cases}
  \textbf{w}_{h} & \text{if } ||\textbf{w}_{h}-\textbf{c}_{h}||_{2}\leq \epsilon_{h}\\
  \frac{\epsilon_{h}}{||\textbf{w}_{h}-\textbf{c}_{h}||_{2}}(\textbf{w}_{h}-\textbf{c}_{h})+\textbf{c}_{h} & \text{otherwise}
\end{cases}$
maps a point $\textbf{w}_{h}$ ($1\leq h \leq H$) back to the bounded feasible region. The procedure is illustrated in Algorithm \ref{pSGD}.

\section{Theoretical Guarantees: A Bound on Generalization Gap}
In this section, we provide a bound on the generalization gap of the proposed stochastic learning algorithm for solving the problem (\ref{Eqn:ell}). The gap is defined as the expected difference
between the generalization error $\textit{R}(.)$ and empirical error 
$\textit{R}_{D_{n}}(\cdot)$. In order to derive the generalization bound, we  first provide basic setup and notations; then prove that our learning algorithm has a uniform stability, which is established in Theorem \ref{theorem:uniformstability}
using Lemma \ref{propertiesofell} and Lemma \ref{boundonparams}; finally derive our generalization bound, which is established in Theorem \ref{maintheorem} using the McDiarmid inequality (see Theorem \ref{McDiarmid}).
\subsection{Basic setup and notations}
\textbf{Generalization Error}. Let $\textbf{W}_{n}$ be the parameters of the parametric function (\ref{Eqn:linearform}) obtained by training on the data set
$D_{n}$ using Algorithm \ref{pSGD}. Then the generalization error (or risk) $\textit{R}(\textbf{W}_{n})$ with a loss
function $\ell$ is defined as:
\begin{equation*}
    \textit{R}(\textbf{W}_{n})=\textbf{E}_{z,z^\prime}[\ell(\textbf{W}_{n}, z, z^\prime)]
\end{equation*}
where $\textbf{E}_{z,z^\prime}\left[\ell(\cdot,\cdot,\cdot)\right]$ denotes the expectation of function $\ell$ when $z$ and $z^\prime$ are sampled according the distribution $\mathcal{P}$.\\ 
\textbf{Empirical Error}. The empirical error $\textit{R}_{D_{n}}(\textbf{W}_{n})$ is defined on the training data set $D_{n}$ as :
\begin{equation*}
    \textit{R}_{D_{n}}(\textbf{W}_{n})=\frac{1}{n^{2}}\sum_{z_{i}\in D_{n}}\sum_{z_{j}\in D_{n}}\ell(\textbf{W}_{n}, z_{i},z_{j})
\end{equation*}
\textbf{Expected Generalization Gap}. As Algorithm \ref{pSGD} is based on a randomized procedure, we use the definition of the expected
generalization gap as follows:
\begin{equation*}
\label{expectedgap}
    \mathbb{K}_{n}= \textbf{E}_{\text{SGD}}[\textit{R}(\textbf{W}_{n})-\textit{R}_{D_{n}}(\textbf{W}_{n})]
\end{equation*}
where $\textbf{E}_{\text{SGD}}$ denotes the expectation taken over the inherent randomness of the stochastic algorithm.

\subsection{Uniform stability of the stochastic learning algorithm}
Intuitively, a learning algorithm is said to have a uniform stability if its output is stable under a small modification of the training data set. For a randomized learning algorithm, the uniform stability property is defined as follows:
\begin{defn}
[Uniform Stability of the randomized algorithm] A randomized algorithm $\mathbb{A}$ is $\beta_{n}$-uniformly stable with respect to a loss function
$\ell$, if the following inequality holds:
\begin{equation*}
   \forall (D_{n}, k)\text{,  } \sup_{z, z^\prime}|\textbf{E}_{\text{SGD}}[\ell(\textbf{W}_{n}, z, z^\prime)]-\textbf{E}_{\text{SGD}}[\ell(\textbf{W}_{n,k}, z, z^\prime)]|\leq \beta_{n}
\end{equation*}
where $D_{n,k}$ is the new data set obtained from $D_{n}$ by replacing $z_{k}\in D_{n}$ with a new example $\hat{z}_{k}$ sampled from $\mathcal{P}$;  $\textbf{W}_{n}$ and $\textbf{W}_{n,k}$ are the outputs of $\mathbb{A}$ trained on two data sets $D_{n}$ and $D_{n,k}$, respectively.
\end{defn}
In order to prove that Algorithm \ref{pSGD} has the uniform stability property, we need the following lemma (its proof is placed in the appendix section):
\begin{lem}
\label{boundonparams}
Let the loss function $\ell$ defined in the problem (\ref{Eqn:ell}) be $\beta$-smooth and 
$L$-Lipschitz; $\textbf{W}_{n}^{(T)}$ and $\textbf{W}_{n,k}^{(T)}$ be the parameters of the parametric form (\ref{Eqn:linearform}) trained on $D_{n}$ and $D_{n,k}$, respectively, using Algorithm \ref{pSGD} with the number of iterations $T$ and learning rate $\mu$. Then, the expected difference in the model parameters is upper bounded by:
\begin{equation}
    \textbf{E}_{\text{SGD}}\left[\left\Vert \textbf{W}_{n}^{(T)}-\textbf{W}_{n,k}^{(T)}\right\Vert_{2}\right]\leq \frac{4}{n}\mu T L
\end{equation}
\end{lem}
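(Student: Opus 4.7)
The plan is to couple the runs of the stochastic algorithm on $D_n$ and on $D_{n,k}$ by using the same initialization $\textbf{W}^{(0)}=\textbf{1}_d$ and the same realization of the random sequence of sampled index pairs $(i_t,j_t)$. Writing $\Delta_t := \textbf{W}_n^{(t)}-\textbf{W}_{n,k}^{(t)}$, the coupling gives $\Delta_0=\mathbf{0}$, so the task reduces to bounding the expected per-step growth of $\|\Delta_t\|_2$ and summing up to $T$.

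At iteration $t$, I would split the analysis according to whether the sampled index pair involves position $k$. \textbf{Case 1: $i_t\neq k$ and $j_t\neq k$.} Both runs see the same training pair and therefore apply the same update map $F(\textbf{W})=\mathrm{proj}_{\mathcal{C}}\bigl[\textbf{W}-\mu\nabla_{\textbf{W}}\ell(\textbf{W},z_{i_t},z_{j_t})\bigr]$. Since $\ell(\cdot,z,z')$ is convex (the functions $V_1,V_2$ are convex in the scalar $d_{\textbf{W}}$, which is affine in $\textbf{W}$) and $\beta$-smooth by Lemma \ref{propertiesofell}, the standard fact that a gradient step on a convex $\beta$-smooth function is non-expansive under the stepsize condition $\mu\le 2/\beta$, combined with non-expansiveness of projection onto the convex set $\mathcal{C}$, yields $\|\Delta_{t+1}\|_2\le\|\Delta_t\|_2$. \textbf{Case 2: $i_t=k$ or $j_t=k$.} The two runs may now see different pairs, but by non-expansiveness of the projection, the triangle inequality, and the $L$-Lipschitz bound $\|\nabla\ell\|_2\le L$ from Lemma \ref{propertiesofell}, we get $\|\Delta_{t+1}\|_2\le\|\Delta_t\|_2+2\mu L$.

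A union bound over the two draws shows that Case 2 occurs with probability at most $2/n$ (regardless of whether sampling is with or without replacement). Conditioning on $\Delta_t$ and taking expectation over the random pair at step $t$,
\[
\textbf{E}_{\text{SGD}}\bigl[\|\Delta_{t+1}\|_2 \,\big|\, \Delta_t\bigr]
\;\le\; \|\Delta_t\|_2 \;+\; \tfrac{2}{n}\cdot 2\mu L
\;=\; \|\Delta_t\|_2+\tfrac{4\mu L}{n}.
\]
Taking a full expectation, the sequence $a_t := \textbf{E}_{\text{SGD}}[\|\Delta_t\|_2]$ therefore satisfies $a_{t+1}\le a_t + 4\mu L/n$, and telescoping from $a_0=0$ gives $a_T\le \tfrac{4\mu T L}{n}$, which is the claim.

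The main technical point is Case 1: non-expansiveness of a gradient step on a convex $\beta$-smooth function genuinely needs the stepsize condition $\mu\le 2/\beta$. Without such a condition, Case 1 only yields $\|\Delta_{t+1}\|_2\le(1+\mu\beta)\|\Delta_t\|_2$ and the resulting bound degrades exponentially in $T$. I would therefore flag this stepsize assumption explicitly when executing the proof; everything else is a routine case analysis coupled with a telescoping sum, and no additional ingredient beyond Lemma \ref{propertiesofell} and convexity of $V_1,V_2$ is required.
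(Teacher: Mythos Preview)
Your proposal is correct and mirrors the paper's own proof almost exactly: the paper couples the two runs via the same random sample sequence, splits into the same two cases (pair not involving index $k$ versus pair involving $k$), uses co-coercivity of the convex $\beta$-smooth loss together with non-expansiveness of the projection to get $\|\Delta_{t+1}\|_2\le\|\Delta_t\|_2$ in Case~1 under the assumption $\mu\le 2/\beta$, bounds Case~2 by $\|\Delta_t\|_2+2\mu L$, computes the Case~2 probability as $1-(1-1/n)^2\le 2/n$, and telescopes. Your explicit flagging of the stepsize condition $\mu\le 2/\beta$ matches what the paper also invokes in passing.
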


Using Lemma \ref{boundonparams} and $L$-Lipschitz property of function $\ell$ (see Lemma \ref{propertiesofell}), we can now prove the stability of Algorithm \ref{pSGD}.
\begin{thm}
\label{theorem:uniformstability}
[Uniform Stability of Algorithm \ref{pSGD}] Let the loss function $\ell$ defined in the problem (\ref{Eqn:ell}) be $\beta$-smooth and 
$L$-Lipschitz. Then Algorithm \ref{pSGD} with the fixed learning rate $\mu$ is $k_{n}$-uniformly stable where $k_{n}=\frac{4}{n} \mu T L^{2} $.
\end{thm}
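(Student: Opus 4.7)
The plan is to reduce the stability bound to an application of the $L$-Lipschitz property of $\ell$ (established in Lemma \ref{propertiesofell}) together with the parameter-stability bound of Lemma \ref{boundonparams}. Since the uniform stability definition requires controlling $|\mathbf{E}_{\text{SGD}}[\ell(\mathbf{W}_n, z, z')] - \mathbf{E}_{\text{SGD}}[\ell(\mathbf{W}_{n,k}, z, z')]|$ uniformly over $z, z'$, and both expectations share the same source of randomness (the SGD sample path), I would couple them under a single expectation and push the absolute value inside.

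More concretely, I would fix an arbitrary pair $(z, z')$ and write
\begin{align*}
\bigl|\mathbf{E}_{\text{SGD}}[\ell(\mathbf{W}_n^{(T)}, z, z')] - \mathbf{E}_{\text{SGD}}[\ell(\mathbf{W}_{n,k}^{(T)}, z, z')]\bigr|
&= \bigl|\mathbf{E}_{\text{SGD}}[\ell(\mathbf{W}_n^{(T)}, z, z') - \ell(\mathbf{W}_{n,k}^{(T)}, z, z')]\bigr| \\
&\leq \mathbf{E}_{\text{SGD}}\bigl[\bigl|\ell(\mathbf{W}_n^{(T)}, z, z') - \ell(\mathbf{W}_{n,k}^{(T)}, z, z')\bigr|\bigr],
\end{align*}
using linearity of expectation and Jensen's inequality applied to $|\cdot|$. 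Then the $L$-Lipschitz property of $\ell$ in its first argument (Lemma \ref{propertiesofell}) yields
\begin{equation*}
\bigl|\ell(\mathbf{W}_n^{(T)}, z, z') - \ell(\mathbf{W}_{n,k}^{(T)}, z, z')\bigr| \leq L \bigl\|\mathbf{W}_n^{(T)} - \mathbf{W}_{n,k}^{(T)}\bigr\|_{2}.
\end{equation*}
Taking the expectation over the SGD randomness and invoking Lemma \ref{boundonparams}, the right-hand side is bounded by $L \cdot \frac{4}{n}\mu T L = \frac{4}{n}\mu T L^{2}$, which is exactly $k_n$. Since this bound is independent of $(z, z')$ and of the choice of replaced index $k$, the supremum over $z, z'$ is controlled uniformly, establishing the claim.

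There is no real obstacle in this proof — the heavy lifting has already been done in Lemma \ref{propertiesofell} (smoothness, Lipschitzness, boundedness of $\ell$) and in Lemma \ref{boundonparams} (the $O(\mu T L / n)$ bound on the expected parameter divergence under one sample swap). The only subtle point is being careful that Jensen's inequality is used correctly to bring the absolute value inside the expectation, and that the Lipschitz constant $L$ from Lemma \ref{propertiesofell} is the same constant appearing in Lemma \ref{boundonparams}, so that the product yields $L^{2}$ in the final bound. The argument is essentially a short chain of inequalities, and I would present it as a two-line calculation rather than a long derivation.
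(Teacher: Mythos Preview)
Your proposal is correct and follows essentially the same approach as the paper's proof: both apply the $L$-Lipschitz property of $\ell$ (Lemma~\ref{propertiesofell}) to bound the loss difference by $L\,\mathbf{E}_{\text{SGD}}\|\mathbf{W}_n - \mathbf{W}_{n,k}\|_2$, then invoke Lemma~\ref{boundonparams} to obtain $\frac{4}{n}\mu T L^2$. You are simply a bit more explicit about the Jensen step that justifies moving the absolute value inside the expectation, which the paper's two-line proof leaves implicit.
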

\begin{proof}
We have the following inequalities:
\begin{equation}
    |\textbf{E}_{\text{SGD}}[\ell(\textbf{W}_{n}, z, z^\prime)]-\textbf{E}_{\text{SGD}}[\ell(\textbf{W}_{n,k}, z, z^\prime)]|\leq L\textbf{E}_{\text{SGD}}\left\Vert \textbf{W}_{n}-\textbf{W}_{n,k}\right\Vert_{2} \leq \frac{4}{n} \mu T L^{2}
\end{equation}
where the first and second inequalities are obtained by the $L$-Lipschitz property of $\ell$ and Lemma \ref{boundonparams}, respectively. This completes the proof.
\end{proof}

\subsection{Bound on generalization gap}
Using the property of uniform stability in the previous subsection,
we can derive the generalization bound which is done by the McDiarmid inequality \cite{mcdiarmid1989method}.
\begin{thm}
\label{McDiarmid}
[McDiarmid inequality \cite{mcdiarmid1989method}] Let $X_{1},X_{2},...,X_{n}$ be
$n$ independent random variables taking values in $\mathcal{X}$
and let $Z=f(X_{1},...,X_{n})$. If, for each $1\leq i\leq n$, there exists a constant $c_{i}$ such that\\

  $\sup_{x_{1},...,x_{n},x_{i}^\prime}|f(x_{1},...,x_{n})-f(x_{1},...,x_{i}^\prime,...,x_{n})|\leq c_{i}\text{, }\forall 1\leq i\leq n\text{,}$\\
  
  then for any $\epsilon>0$, $\text{Pr}[|Z-\textbf{E}[Z]|\geq\epsilon]\leq2\mathrm{exp}\left(\frac{-2\epsilon^{2}}{\sum_{i=1}^{n}c_{i}^{2}}\right)$
\end{thm}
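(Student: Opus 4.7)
The plan is to prove McDiarmid's inequality by the classical martingale-difference (Doob) approach, then invoke Hoeffding's lemma term by term and a standard Chernoff argument. Concretely, I would first introduce the Doob martingale associated with $Z=f(X_1,\dots,X_n)$ by setting $Z_0=\mathbf{E}[Z]$ and $Z_i=\mathbf{E}[Z\mid X_1,\dots,X_i]$ for $1\le i\le n$, so that $Z_n=Z$ almost surely and the differences $D_i=Z_i-Z_{i-1}$ form a martingale difference sequence satisfying $Z-\mathbf{E}[Z]=\sum_{i=1}^n D_i$.

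The next step is to bound the conditional range of each $D_i$. Using independence of the $X_j$'s, I would rewrite $Z_i-Z_{i-1}$ as the conditional expectation of $f(X_1,\dots,X_n)-f(X_1,\dots,X_{i-1},X_i',X_{i+1},\dots,X_n)$ averaged over an independent copy $X_i'$ of $X_i$. Defining
\begin{equation*}
A_i=\sup_{x}\mathbf{E}[Z\mid X_1,\dots,X_{i-1},X_i=x]-Z_{i-1},\qquad B_i=\inf_{x}\mathbf{E}[Z\mid X_1,\dots,X_{i-1},X_i=x]-Z_{i-1},
\end{equation*}
the bounded differences hypothesis $\sup|f(\cdots,x_i,\cdots)-f(\cdots,x_i',\cdots)|\le c_i$ gives $A_i-B_i\le c_i$, and $D_i\in[B_i,A_i]$ conditionally on $X_1,\dots,X_{i-1}$.

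With the conditional range controlled, I would apply Hoeffding's lemma: for any random variable $Y$ with $\mathbf{E}[Y]=0$ and $Y\in[a,b]$, one has $\mathbf{E}[e^{sY}]\le e^{s^2(b-a)^2/8}$. Applied conditionally to each $D_i$, this yields $\mathbf{E}[e^{sD_i}\mid X_1,\dots,X_{i-1}]\le e^{s^2 c_i^2/8}$. Iterating the tower property across $i=n,n-1,\dots,1$ gives $\mathbf{E}[e^{s(Z-\mathbf{E}[Z])}]\le e^{s^2\sum_i c_i^2/8}$. Then Markov's inequality applied to $e^{s(Z-\mathbf{E}[Z])}$ and optimization over $s>0$ (choosing $s=4\epsilon/\sum_i c_i^2$) delivers the one-sided tail bound $\Pr[Z-\mathbf{E}[Z]\ge\epsilon]\le\exp(-2\epsilon^2/\sum_i c_i^2)$; the symmetric argument applied to $-Z$ plus a union bound yields the two-sided inequality claimed.

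The main technical obstacle is the second step: justifying the conditional range bound $A_i-B_i\le c_i$ rigorously, which is where independence of the coordinates is used in an essential way (without it the Doob differences need not inherit the pointwise bounded-differences condition). Once that geometric fact is established, the rest of the argument is a routine Chernoff/Hoeffding calculation and requires no further input specific to the function $f$.
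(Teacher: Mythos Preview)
Your proposal is correct and is the classical Doob-martingale/Hoeffding/Chernoff proof of McDiarmid's inequality. Note, however, that the paper does \emph{not} actually prove this theorem: it is stated with a citation to \cite{mcdiarmid1989method} and then invoked as a black box in the proof of Theorem~\ref{maintheorem}, so there is no ``paper's own proof'' to compare against here.
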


\noindent
To derive the bound on $R(W_{n})$, we replace $Z$ by $K_{n}$ in Theorem \ref{McDiarmid} and bound $\textbf{E}_{\text{SGD}}\left[ \mathbb{K}_{n}\right]$ and $|\mathbb{K}_{n}-\mathbb{K}_{n,k}|$ which are established by the following lemmas (see their proofs in the appendix section).

\begin{lem}
\label{Lemma:BoundKn}
For the loss function satisfying a uniform stability in $k_{n}$, we have the following inequality:
\begin{equation}
    \textbf{E}_{D_{n}}[\mathbb{K}_{n}]\leq 2k_{n}
\end{equation}
\end{lem}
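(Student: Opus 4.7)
The plan is to adapt the classical symmetry/renaming argument of Bousquet and Elisseeff from the pointwise to the pairwise loss setting. I first expand
\[
\textbf{E}_{D_n}[\mathbb{K}_n] = \textbf{E}_{D_n,\text{SGD}}\bigl[\textit{R}(\textbf{W}_n)\bigr] - \textbf{E}_{D_n,\text{SGD}}\bigl[\textit{R}_{D_n}(\textbf{W}_n)\bigr],
\]
and rewrite the true risk as $\textbf{E}_{D_n,\hat z,\hat z',\text{SGD}}[\ell(\textbf{W}_n,\hat z,\hat z')]$, where $\hat z,\hat z'$ are two fresh samples drawn independently from $\mathcal{P}$. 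For the empirical risk, the i.i.d.\ assumption on $D_n$ together with exchangeability gives that all off-diagonal terms in $\frac{1}{n^2}\sum_{i,j}\textbf{E}[\ell(\textbf{W}_n,z_i,z_j)]$ equal $\textbf{E}[\ell(\textbf{W}_n,z_1,z_2)]$; the $n$ diagonal terms only contribute an $O(1/n)$ correction that can be absorbed (or ignored under the standard convention that pairs with $i=j$ are excluded).

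Next comes the key renaming step. Since $z_1,z_2,\hat z,\hat z'$ are all i.i.d.\ under $\mathcal{P}$, swapping the names $z_1\leftrightarrow\hat z$ and $z_2\leftrightarrow\hat z'$ in the integrand of $\textbf{E}[\ell(\textbf{W}_n,\hat z,\hat z')]$ leaves the joint distribution unchanged. After the swap the arguments of $\ell$ become $z_1,z_2$, while $\textbf{W}_n$—which was trained on $(z_1,z_2,z_3,\dots,z_n)$—becomes the parameter $\textbf{W}_{n,1,2}$ produced by running Algorithm~\ref{pSGD} on the dataset $(\hat z,\hat z',z_3,\dots,z_n)$. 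Hence
\[
\textbf{E}_{D_n,\text{SGD}}[\textit{R}(\textbf{W}_n)] \;=\; \textbf{E}\bigl[\ell(\textbf{W}_{n,1,2},z_1,z_2)\bigr],
\]
so that (up to the diagonal correction)
\[
\textbf{E}_{D_n}[\mathbb{K}_n] \;=\; \textbf{E}\bigl[\ell(\textbf{W}_{n,1,2},z_1,z_2) - \ell(\textbf{W}_n,z_1,z_2)\bigr].
\]

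Finally I introduce the intermediate algorithm output $\textbf{W}_{n,1}$ trained on $(\hat z,z_2,z_3,\dots,z_n)$, insert it by adding and subtracting $\ell(\textbf{W}_{n,1},z_1,z_2)$, and apply the triangle inequality. Each of the two resulting terms differs from the next by replacing exactly one training example, so the uniform stability bound of Theorem~\ref{theorem:uniformstability} applied in turn yields
\[
\bigl|\textbf{E}_{\text{SGD}}[\ell(\textbf{W}_{n,1,2},z_1,z_2) - \ell(\textbf{W}_{n,1},z_1,z_2)]\bigr| \leq k_n,\qquad \bigl|\textbf{E}_{\text{SGD}}[\ell(\textbf{W}_{n,1},z_1,z_2) - \ell(\textbf{W}_n,z_1,z_2)]\bigr| \leq k_n.
\]
Summing these gives $\textbf{E}_{D_n}[\mathbb{K}_n] \leq 2k_n$, which is the claim.

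The main obstacle I anticipate is not any single hard calculation but the careful bookkeeping of three sources of randomness—the training set $D_n$, the extra samples $\hat z,\hat z'$, and the internal randomness of the stochastic algorithm—so that Fubini applies and so that the uniform stability bound (which is stated pointwise in $z,z'$ after taking $\textbf{E}_{\text{SGD}}$) can be inserted inside the outer expectation. A secondary technical nuisance is justifying that the diagonal $i=j$ terms in the empirical risk do not spoil the identification of the empirical risk's expectation with $\textbf{E}[\ell(\textbf{W}_n,z_1,z_2)]$; this is standard and handled by either the $n^2\to n(n-1)$ adjustment or by noting the $O(M/n)$ remainder is dominated by the $2k_n$ bound for the relevant regime of $T$ and $\mu$.
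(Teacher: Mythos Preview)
Your argument is correct and uses the same Bousquet--Elisseeff renaming idea as the paper, but organizes the telescoping differently. The paper keeps the trained parameter $\textbf{W}_n$ fixed and inserts the intermediate \emph{loss evaluation} $\ell(\textbf{W}_n,z_k,z')$, splitting the gap into two pieces $(a)$ and $(b)$; in each piece a single swap ($z\leftrightarrow z_k$ for $(a)$, $z'\leftrightarrow z_j$ for $(b)$) turns $\textbf{W}_n$ into $\textbf{W}_{n,k}$ and stability gives $k_n$. You instead swap both fresh samples at once to produce $\textbf{W}_{n,1,2}$ and then telescope through the intermediate \emph{parameter} $\textbf{W}_{n,1}$, applying stability once per single-sample replacement. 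Both routes invoke uniform stability exactly twice and arrive at the same $2k_n$; your version makes the ``two replacements $=$ two stability applications'' structure more explicit, while the paper's version hews closer to the classical single-loss presentation. The diagonal $i=j$ issue you flag is real and is glossed over in the paper's ``similarly'' step for part $(b)$ as well; your $O(M/n)$ remark is the honest way to dispose of it.
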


\begin{lem}
\label{Lemma:BoundKminusKn}
For the loss function satisfying a uniform stability in $k_{n}$ and upper bounded by $M$, we have the following inequality:
\begin{equation}
    |\mathbb{K}_{n}-\mathbb{K}_{n,k}|\leq 2 k_{n}+\frac{2M}{n}
\end{equation}
\end{lem}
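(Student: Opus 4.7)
The plan is to reduce $|\mathbb{K}_n-\mathbb{K}_{n,k}|$ to three separately controllable pieces by adding and subtracting two auxiliary quantities, namely $\mathbf{E}_{\text{SGD}}[R(\mathbf{W}_{n,k})]$ and $\mathbf{E}_{\text{SGD}}[R_{D_n}(\mathbf{W}_{n,k})]$. Concretely, I would write
\begin{equation*}
\mathbb{K}_n - \mathbb{K}_{n,k} = \underbrace{\mathbf{E}_{\text{SGD}}[R(\mathbf{W}_n) - R(\mathbf{W}_{n,k})]}_{(A)} + \underbrace{\mathbf{E}_{\text{SGD}}[R_{D_n}(\mathbf{W}_{n,k}) - R_{D_n}(\mathbf{W}_n)]}_{(B)} + \underbrace{\mathbf{E}_{\text{SGD}}[R_{D_{n,k}}(\mathbf{W}_{n,k}) - R_{D_n}(\mathbf{W}_{n,k})]}_{(C)}.
\end{equation*}
Here $(A)$ and $(B)$ compare the two SGD outputs on the same sample distribution / same dataset, while $(C)$ compares the two empirical averages evaluated at the \emph{same} model $\mathbf{W}_{n,k}$.

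For $(A)$, pulling the absolute value inside the expectation over $z,z'$ gives $|(A)| \leq \mathbf{E}_{z,z'}\bigl|\mathbf{E}_{\text{SGD}}[\ell(\mathbf{W}_n,z,z')-\ell(\mathbf{W}_{n,k},z,z')]\bigr|$, and by the uniform stability of Algorithm \ref{pSGD} (Theorem \ref{theorem:uniformstability}) the inner quantity is at most $k_n$ for every fixed $z,z'$, so $|(A)|\le k_n$. For $(B)$, the same argument applied term by term to the double sum $\frac{1}{n^2}\sum_{i,j}\ell(\cdot,z_i,z_j)$ yields $|(B)|\le \frac{1}{n^2}\cdot n^2 \cdot k_n = k_n$.

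The main work is $(C)$, which has nothing to do with stability and is instead a finite-sample fluctuation bound. Replacing $z_k$ by $\hat z_k$ only affects those index pairs $(i,j)$ with $i=k$ or $j=k$; there are exactly $2n-1$ such pairs. Since $0\le \ell \le M$ by Lemma \ref{propertiesofell}, each of the $2n-1$ affected summands contributes at most $M$ in absolute value, giving
\begin{equation*}
|(C)| \;\le\; \frac{(2n-1)\,M}{n^2} \;\le\; \frac{2M}{n}.
\end{equation*}
Combining the three estimates via the triangle inequality yields $|\mathbb{K}_n-\mathbb{K}_{n,k}|\le 2k_n + \frac{2M}{n}$, as claimed. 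I do not anticipate a serious obstacle here: the only subtle point is making sure the decomposition isolates $(C)$ as a pure dataset-swap term evaluated at a single, fixed model, so that stability is only invoked in $(A)$ and $(B)$ and boundedness is only invoked in $(C)$.
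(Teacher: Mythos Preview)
Your proof is correct and follows essentially the same strategy as the paper: split off the risk term and bound it by stability, then handle the empirical-risk difference by a combination of stability and boundedness via the triangle inequality. The only cosmetic difference is that the paper uses a two-term split $(c)+(d)$ and then, inside $(d)$, separates the $(i,j)$ pairs according to whether $k\in\{i,j\}$, whereas you insert the intermediate point $R_{D_n}(\mathbf{W}_{n,k})$ to get a clean three-term split $(A)+(B)+(C)$ in which the stability and boundedness arguments never mix; both routes use the same ingredients and yield the same bound.
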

\noindent
Now we can derive the generalization bound for $R(\textbf{W}_{n})$ in the following theorem:
\begin{thm}
\label{maintheorem}
Let $D_{n}$ be a training data set with $n$ samples, $\textbf{W}_{n}$ be the solution obtained by minimizing the optimization problem (\ref{Eqn:ell}) using Algorithm \ref{pSGD} with uniform
stability $k_{n}$. Then the following inequality holds for probability of at least $1-\delta$ $\left(0\leq\delta\leq1\right)$:
\begin{equation}
    \textbf{E}_{\text{SGD}}\left[ R(\textbf{W}_{n})-\textit{R}_{D_{n}}(\textbf{W}_{n}) \right] \leq
    2 k_{n} + (n k_{n} + M)\sqrt{\frac{2}{n}\mathrm{log}\frac{2}{\delta}}
\end{equation}
\end{thm}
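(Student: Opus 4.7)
The plan is to apply McDiarmid's inequality to the random variable $\mathbb{K}_n$, viewed as a function of the $n$ i.i.d. training examples $(z_1,\dots,z_n)$, and then combine the resulting concentration with the expectation bound from Lemma \ref{Lemma:BoundKn}.

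First I would set $Z = \mathbb{K}_n = f(z_1,\dots,z_n)$ in Theorem \ref{McDiarmid}. The bounded-difference constants are supplied directly by Lemma \ref{Lemma:BoundKminusKn}: replacing any single $z_k$ by $\hat z_k$ changes $\mathbb{K}_n$ by at most $c_k = 2k_n + \tfrac{2M}{n}$, uniformly in $k$. Plugging $c_k$ into McDiarmid gives, for every $\varepsilon > 0$,
\begin{equation*}
\Pr\bigl[\,|\mathbb{K}_n - \textbf{E}_{D_n}[\mathbb{K}_n]|\geq \varepsilon\,\bigr] \leq 2\exp\!\left(\frac{-2\varepsilon^2}{n\bigl(2k_n + \tfrac{2M}{n}\bigr)^2}\right).
\end{equation*}
Next I would invert this tail bound: setting the right-hand side equal to $\delta$ and solving for $\varepsilon$ yields, with probability at least $1-\delta$,
\begin{equation*}
\mathbb{K}_n \leq \textbf{E}_{D_n}[\mathbb{K}_n] + \bigl(nk_n + M\bigr)\sqrt{\tfrac{2}{n}\log\tfrac{2}{\delta}}.
\end{equation*}

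Finally I would apply Lemma \ref{Lemma:BoundKn} to replace $\textbf{E}_{D_n}[\mathbb{K}_n]$ by $2k_n$, obtaining exactly the stated bound on $\textbf{E}_{\text{SGD}}[R(\textbf{W}_n) - R_{D_n}(\textbf{W}_n)]$. No delicate estimates are needed here: the heavy lifting is hidden in the two lemmas, which in turn lean on the uniform stability established in Theorem \ref{theorem:uniformstability} together with the $L$-Lipschitz and boundedness properties proved in Lemma \ref{propertiesofell}.

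The only subtlety I would be careful about is the algebraic step of inverting McDiarmid's bound cleanly. Writing $c = 2k_n + 2M/n$, the sum $\sum c_k^2 = n c^2$, so $\varepsilon = c\sqrt{\tfrac{n}{2}\log\tfrac{2}{\delta}} = (nk_n + M)\sqrt{\tfrac{2}{n}\log\tfrac{2}{\delta}}$, which matches the theorem statement exactly; this is a routine but easy-to-botch simplification, and I would double-check the factors of $2$ there. Everything else is a direct assembly of the prior lemmas, so no genuine obstacle is expected.
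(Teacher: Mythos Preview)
Your proposal is correct and matches the paper's own proof essentially step for step: apply McDiarmid to $\mathbb{K}_n$ with the bounded-difference constants $c_k = 2k_n + 2M/n$ from Lemma~\ref{Lemma:BoundKminusKn}, invert the tail bound, and then invoke Lemma~\ref{Lemma:BoundKn} to replace $\textbf{E}_{D_n}[\mathbb{K}_n]$ by $2k_n$. If anything, your write-up is slightly more explicit than the paper's, which leaves the final application of Lemma~\ref{Lemma:BoundKn} implicit.
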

\begin{proof}
Applying McDiarmid's concentration inequality (\ref{McDiarmid}) by replacing $Z$ with $\mathbb{K}_{n}$, we have:
\begin{align*}
    Pr(\mathbb{K}_{n} - \textbf{E}_{D_{n}}\left[ \mathbb{K}_{n}\right] \geq \epsilon)\leq 2\mathrm{exp}\left(\frac{-2\epsilon^{2}}{n\left(2 k_{n}+\frac{2M}{n}\right)^{2}}\right)
\end{align*}
By fixing $\delta=2\mathrm{exp}\left(\frac{-2\epsilon^{2}}{n\left(2 k_{n}+\frac{2M}{n}\right)^{2}}\right)$, we get $\epsilon=(n k_{n} + M)\sqrt{\frac{2}{n}\mathrm{log}\frac{1}{\delta}}$ which completes the proof of Theorem \ref{maintheorem}.
\end{proof}
The generalization bound is meaningful if the bound converge to 0 as $n\to \infty$. Our derived generalization bound converges to 0 as $k_{n}$ decays with $O(\frac{1}{n})$. This confirms  Algorithm \ref{pSGD} converges.

\section{Experiments}
In this section, we demonstrate the benefit of learning subtree pattern weights by experiments
on both synthetic and real-world data. We performed classification experiments using the C-SVM implementation LIBSVM \cite{chang2011libsvm}. The necessary parameters of SVM were selected by cross-validation on the training set. These are the regularization parameter $C \in \{10^{-3},10^{-2},..., 10^{2},10^{3}\}$ and kernel parameter $\gamma \in \{0.0001, 0.001,0.01\}$. 
For learning the weights $\textbf{W}$ of subtree patterns (or WL labels) in Algorithm \ref{pSGD}, the learning rate $\mu$ and maximum number of iterations $T$ were set as $0.0001$ and 500, respectively; $\epsilon_{h}\in \{0.1, 0.5, 1.0\}$ were selected by cross validation based on the training set and $\textbf{c}_{h}$ was fixed as a vector of ones, i.e. $\textbf{c}_{h}=\textbf{1}_{|\Sigma^{h}|}$ for $h=1,..,H$;
the hyperparameters $\alpha_{1}$, $\alpha_{2}$ and $\sigma$ were empirically determined as 1.0, 0.5 and 0.1, respectively.
All kernels were implemented in Python 3.0 and experiments were conducted on an Intel Core i9 at 2.3 Ghz with 64GB of RAM using a single processor only. The source code can be accessed through \url{https://github.com/haidnguyen0909/weightedWWL}.
\subsection{Synthetic data}
We designed eight substructures, shown in Figure \ref{Fig:synthetic_subs}, in which substructures indexed by 1, 2, 5 and 6 are assumed to be indicative to positive class (+1) as they have a pattern '1-0(-2)-0' in common. The others are indicative to negative class (-1). Our synthetic data set consists of \textit{eight groups of graphs, each corresponds to one of these eight substructures} by randomly adding noisy nodes and edges. We used groups corresponding substructures 1, 2, 3 and 4 as training data and the others as testing data. We constructed two kernels for graphs: 
WWL and the proposed method with number of WL iterations $H=2$, then used SVM for classification. We reported mean accuracy obtained by ten synthetic data sets generated in this way.

We observed that WWL obtained mean accuracies of 82.4\%, while the proposed method achieved significantly higher accuracy of 95\%. It is noted that the testing examples were confusing the classifier. For instance, the substructure 5 has the same similarity with substructures 2 (indicative to positive class) and 4 (indicative to negative class) according to the WWL kernel, making it hard for the classifier to distinguish graphs containing the substructure 5. In contrast, 
this confusion can be alleviated by learning subtree pattern weights. In particular, the pattern '1-0(-2)-0' present in the substructure 5 is assigned a high weight by the proposed method (see Figure \ref{Fig:synthetic_w}). Therefore, graphs generated from the substructure 5 are more likely to be classified as positive.

\begin{figure}[t]
    \centering
	\centerline{\includegraphics[width=0.8\columnwidth]{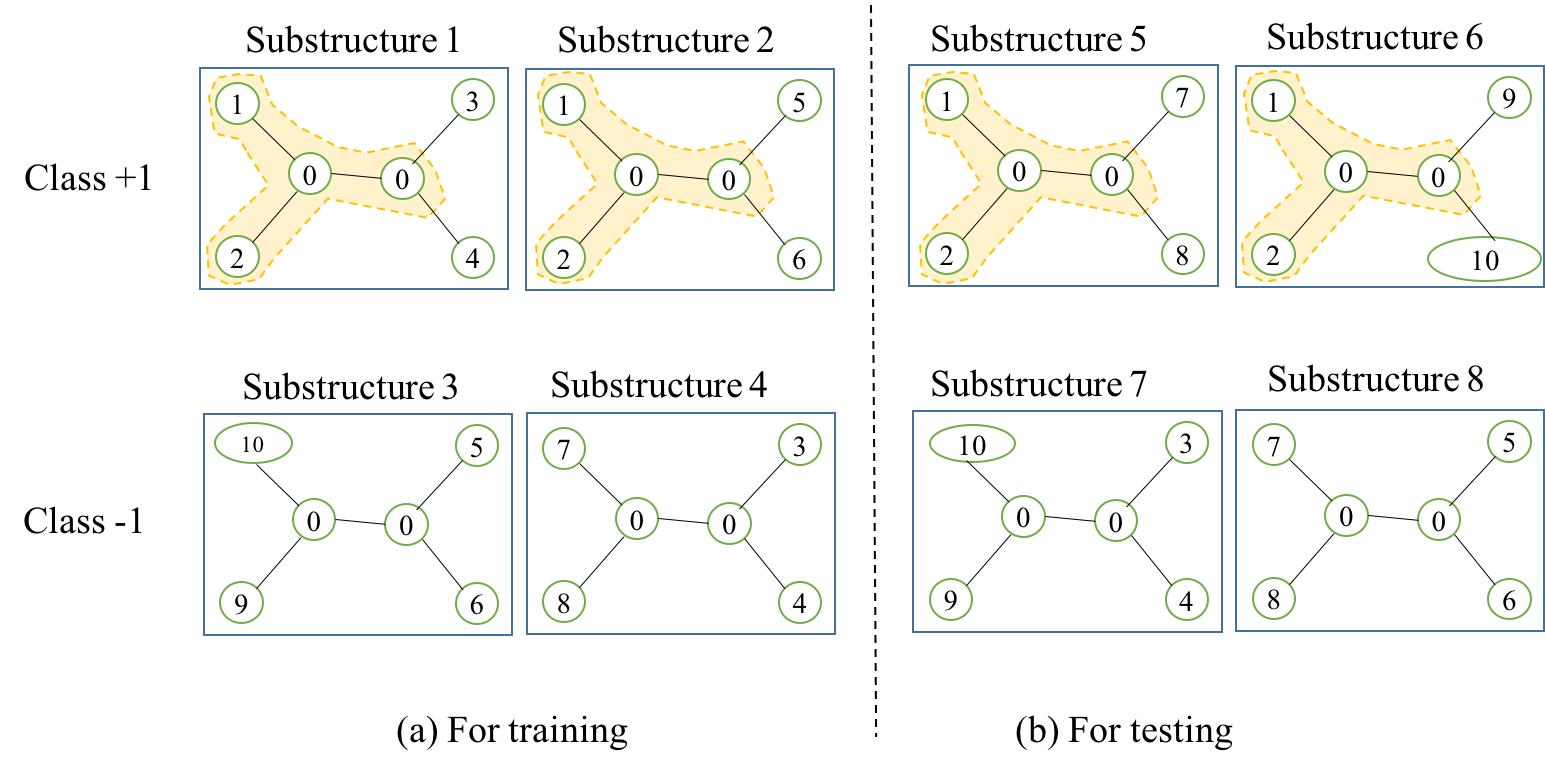}}
	\caption{Designed substructures 1-8: substructures 1,2,5 and 6 are indicative to positive class as they contain pattern '1-0(-2)-0' (emphasized in yellow). The rest are indicative to negative class. Graphs are generated from substructures by adding random nodes and noisy edges (20 examples for each substructure). Graphs generated from the substructures 1, 2, 3 and 4 are used for training. Graph generated from the substructures 5,6,7 and 8 are used for testing.} 
	\label{Fig:synthetic_subs}
\end{figure}

\begin{figure}[t]
    \centering
	\centerline{\includegraphics[width=1.0\columnwidth]{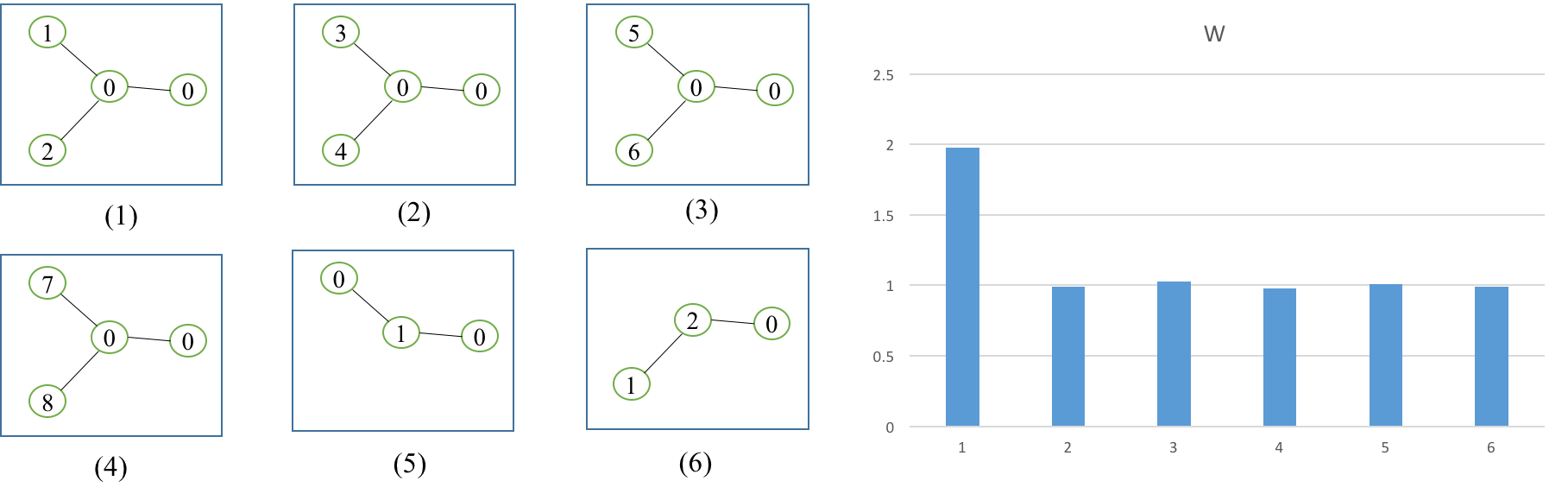}}
	\caption{Examples of selected subtree patterns of the first level $h=1$ (left) and their weights learned by the proposed learning algorithm (right).} 
	\label{Fig:synthetic_w}
\end{figure}

\subsection{Real-world data}
In this subsection we present an experimental evaluation of the proposed method on real-world data. 
We report experimental results on four benchmark bioinformatics data sets, involving node-labeled graphs, particularly, MUTAG, PTC-MR, PROTEIN and NCI1.
The MUTAG dataset consists of graph structures of 188 chemical compounds which are
either mutagenic aromatic or heteroromatic nitro compounds and nodes can have 7 discrete labels. The PTC-MR dataset consists of 344 chemical compounds which are
known to cause or not cause cancer in rats and mice, and nodes can have 19 discrete labels. The PROTEIN dataset consists of relations between  secondary structure elements represented by nodes and neighborhood in the amino-acid sequence or in 3D space by edges, and nodes can have 3 discrete labels. The NCI1 dataset is a balanced 
subset of chemical compounds screened for their ability to inhibit the growth of a panel
of human tumor cell lines, and nodes can have 37 discrete labels. Some statistics of these data sets are shown in Table 1.

We compared the proposed method to several state-of-the-art graph kernels. Due to the large number of graph kernels in the literature, we selected representatives of the major families of
graph kernels. In particular, for the family of walk based kernels, we compared the proposed method to the fast random walk kernel \cite{kashima2003marginalized} that essentially counts the common labeled walks. For the family of path based graph kernels, we compared to the shortest path kernel \cite{borgwardt2005shortest}. For the family of WL based graph kernels, we compared to WL subtree \cite{vishwanathan2010graph}, WL-OA \cite{kriege2016valid} and WWL kernels \cite{togninalli2019wasserstein}. The WL based kernels have been shown to be superior to previous approaches. 

\begin{table}[]
    \centering
    \caption{Statistics of datasets used in experiments}
    \begin{tabular}{l l l l l l}
    \hline
    Datasets & \#Graphs & \#Classes & Avg. \text{card(V)} & Avg. \text{card(E)} & \#labels\\
    \hline
    MUTAG & 188 & 2 (125 vs. 63) & 17.9 & 39.6 & 7\\
    PTC-MR & 344 & 2 (192 vs. 152) & 25.6 & 51.9 & 19 \\
    PROTEIN & 1113 & 2 (663 vs. 450) & 39.1 & 145.63 & 3\\
    NCI1 & 4110 & 2 (2053 vs. 2057 ) & N/A & N/A & N/A\\
    \hline
    \end{tabular}
    
    \label{tab:my_label}
\end{table}

\begin{figure}[t]
    \centering
	\centerline{\includegraphics[width=0.9\columnwidth]{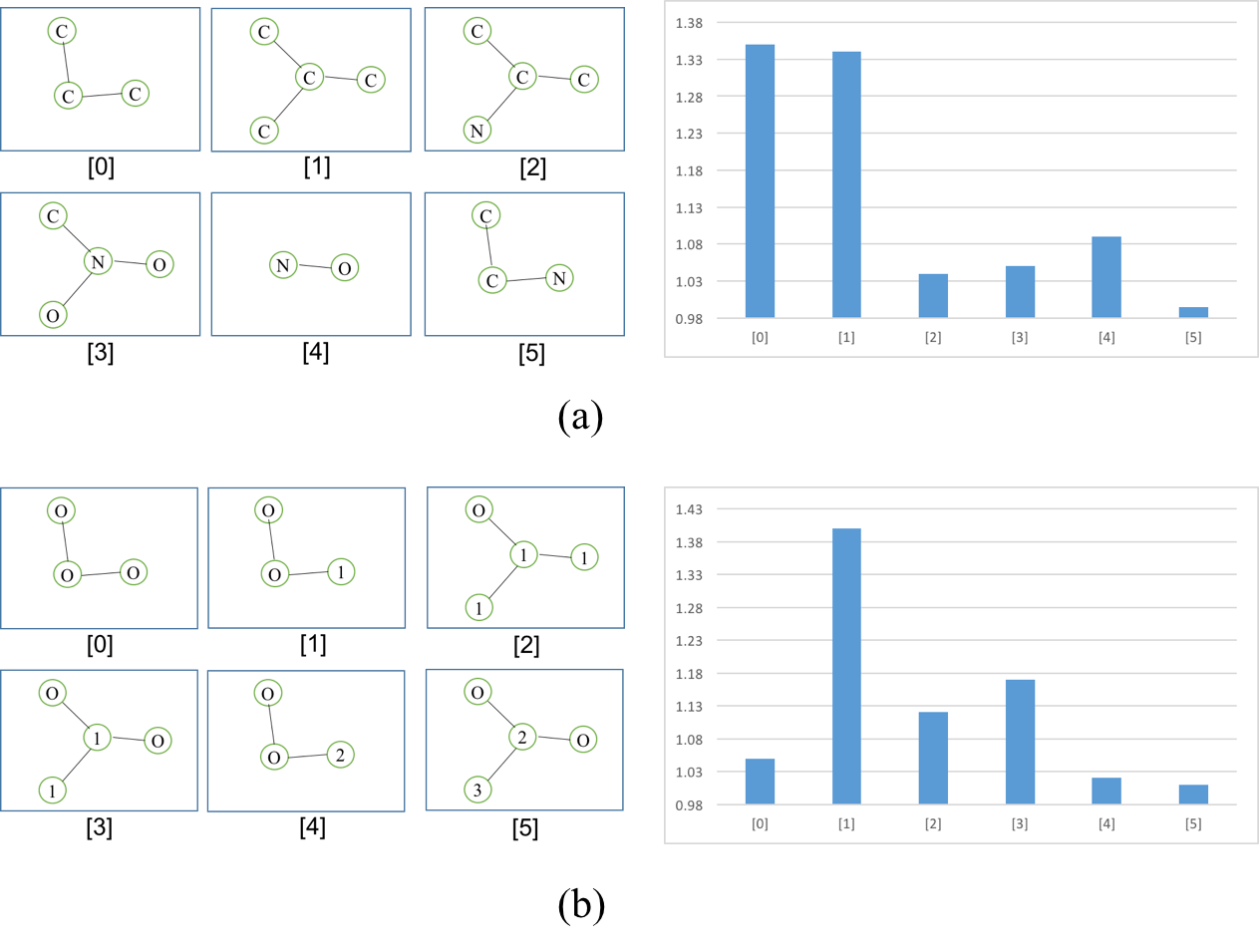}}
	\caption{Examples of selected subtree patterns and their weights of the first level $h=1$ (a) and the second level $h=2$ (b) extracted from MUTAG by Algorithm \ref{pSGD}.} 
	\label{Fig:mutag}
\end{figure}

\begin{figure}[t]
    \centering
	\centerline{\includegraphics[width=0.9\columnwidth]{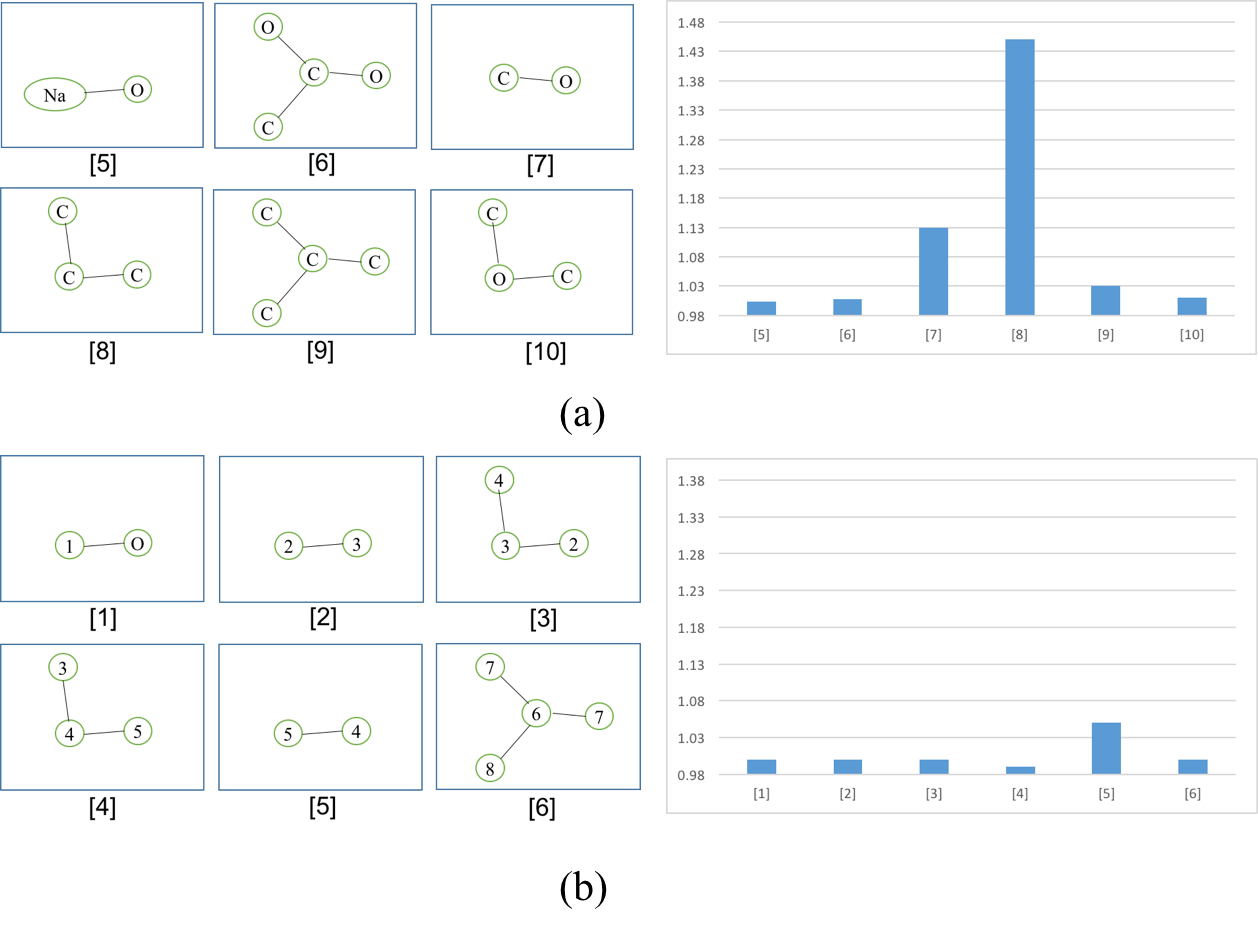}}
	\caption{Examples of selected subtree patterns and their weights of the first level $h=1$ (a) and the second level $h=2$ (b) extracted from PTC-MR by Algorithm \ref{pSGD}.} 
	\label{Fig:ptc}
\end{figure}

We report mean predictive accuracies and standard deviations obtained by 10-fold cross-validation repeated 10 times with random fold assignments. Within each fold, the number of hops $H\in\{1,2,...,6\}$ was selected by cross validation based on the training set.
The results evaluated by classification accuracy are summarised in Table 2. We used one-sided paired \textit{t-test} to verify if the accuracy differences between two methods on data sets are statistically significant.
We empirically observed that random walk and shortest path kernels were less competitive to WL-based kernels on four data sets. 
On three 
datasets MUTAG, PROTEIN and NCI1, the proposed method was comparable with WL-OA while it
improved its unweighted version WWL by 1.4\%, 1.5\% and 0.8\%, respectively (the calculated p-values were 0.061, 0.0087 and 0.055, respectively, smaller than the significance level of $\alpha=0.1$). On 
PTC-MR, the proposed method improved WWL by 0.6\% while outperforming WL-OA by nearly 
5\% (the calculated p-values were 0.0035 and $<$ 0.001, respectively). In all these data sets, random walk, shortest path and WL subtree kernels were dominated by the rest in large margins. Furthermore, we also investigated some selected subtree patterns at the first and second levels ($h=1,2$) along with their weights learned by the proposed algorithm from two data sets: MUTAG and PTC-MR (see Figures \ref{Fig:mutag} and \ref{Fig:ptc}, respectively). Interestingly, the weights were found different over substructures, indicating their different degrees of importance in the prediction task.
These experimental results confirmed the effectiveness of learning important
subtree patterns for WWL kernels.

\begin{table}[]
    \centering
    \caption{Classification accuracies and standard deviation on real-world graph data sets: MUTAG, PTC-MR, PROTEIN and NCI1.}
    \begin{tabular}{l c c c  c}
    \hline
    Kernels & MUTAG & PTC-MR & PROTEIN & NCI1\\
    \hline
    Random Walk \cite{kashima2003marginalized} & 85.06 $\pm$ 0.18 & 55.74 $\pm$ 3.64 & 71.11 $\pm$ 0.83 & 62.88 $\pm$ 0.22\\
    Shortest path \cite{borgwardt2005shortest} & 85.49 $\pm$ 0.59 & 53.29 $\pm$ 0.92 & 73.03 $\pm$ 1.13 & 61.36 $\pm$ 0.19\\
    WL Subtree \cite{shervashidze2009fast} & 85.61 $\pm$ 0.85 & 61.89 $\pm$ 1.97 & 72.5 $\pm$ 0.32 & 85.61 $\pm$ 0.13\\
    WL-OA \cite{kriege2016valid} & \textbf{88.17} $\pm$ 1.98 & 60.49 $\pm$ 1.39 & \textbf{75.89} $\pm$ 0.41 & \textbf{86.17} $\pm$ 0.35 \\
    WWL \cite{togninalli2019wasserstein} & 86.95 $\pm$ 1.35 & 64.86 $\pm$ 1.57 & 74.25 $\pm$ 0.74 & 85.69 $\pm$ 0.28\\
    Proposed & \textbf{88.37} $\pm$  1.82& \textbf{65.44} $\pm$ 0.97 & \textbf{75.73} $\pm$ 0.57 & \textbf{86.45} $\pm$ 0.11\\
    \hline
    \end{tabular}
    
    \label{tab:my_label}
\end{table}

\subsection{Computational efficiency of the proposed stochastic algorithm}
In this subsection, we evaluate the computational efficiency of the proposed Algorithm \ref{pSGD}. We empirically compared two variants: batch and  stochastic (Algorithm \ref{pSGD}), for solving the minimization problem (\ref{Eqn:ell}) in terms of running time. The first variant considers all pairs of graphs for every step of projected gradient descent. The second variant considers one pair of graphs at a time to take a single step.

First we assessed the running time of two variants on randomly generated graphs (as described in Subsection 5.1) with respect to two parameters: number of graphs $N$ and number of WL iterations $H$. We varied $N$ in range
$\{50,100,200,$
\noindent
$400,600,
800,1000\}$ and $H$ in range $\{1,2,3,4,5,6,7\}$. For each individual experiment, we fixed one parameter at its default value and varied the other. The default values were 100 for $N$ and 2 for $H$. We report CPU running times in seconds in Figure \ref{fig:runningtime_synthetic}. Empirically, we observed that the running time of full batch variant increased quickly when increasing the number of graphs $N$ and the number of WL iterations $H$. In contrast, the stochastic variant scaled much better with much lower running times, indicating that Algorithm \ref{pSGD} has high scalability for large scale data sets. The computational efficiency of Algorithm \ref{pSGD} can be explained by the fact that computing gradient of the loss function for a graph pair $g$ and $g^\prime$ involves a few substructures (or WL labels) $v$ shared by $g$ and $g^\prime$ in Eq. (\ref{Eqn:linearform}), i.e., sparsity of the feature vector $\textbf{Z}\left(g,g^\prime\right)$.

Second we assessed the running time of two variants on real-world data sets: MUTAG, PTC-MR, PROTEIN and NCI1. We reported the running time of two variants to finish the entire classification tasks, including learning subtree pattern weights, computing kernels and doing classification, in Table \ref{tab:runningtime_realworld}. The running time were taken average by 10-fold cross validation. We empirically observed that the full batch variant was slow when running on even small data sets MUTAG and PTC-MR, taking in approximately 4 hours and 9 hours, respectively. But the stochastic variant was much faster, taking only less than 4 minutes on the two data sets. We also observed that the stochastic variant could easily scale up to data sets with thousands of graphs. Particularly, on data sets PROTEIN and NCI1, the tasks were performed in nearly 1h 30' and 5h, respectively. However, it was impossible for the full batch variant to finish the tasks in less than 3 days. These evidence showed that the proposed stochastic variant is highly scalable.

\begin{figure}[t]

	\includegraphics[width=1.0\columnwidth]{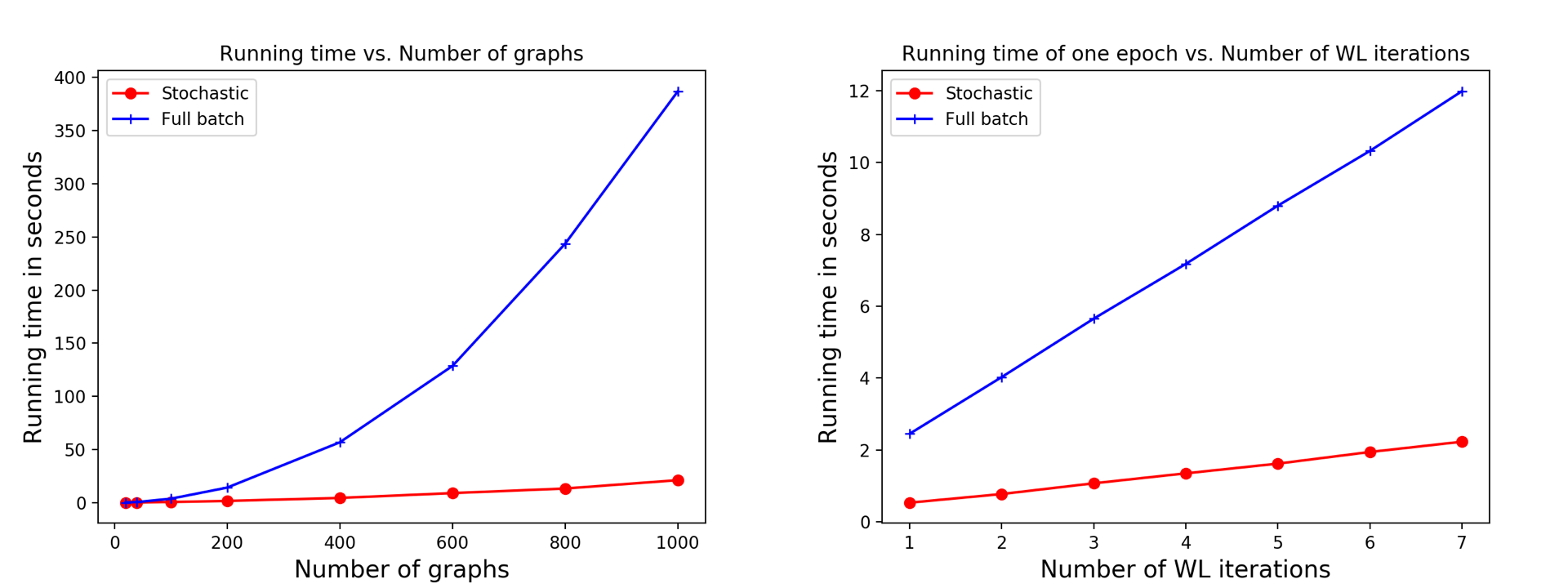}
	\caption{Running time in seconds on synthetic data sets of two variants: full batch and stochastic algorithms, for learning the subtree pattern importance $W$ (in the optimization problem (\ref{Eqn:ell})) (Default values: dataset size $N = 100$, WL iteration $H = 2$).} 
	\label{fig:runningtime_synthetic}
\end{figure}

\begin{table}[]
    \centering
    \caption{Running time in seconds of two variants: full batch and stochastic on real-world data sets: MUTAG, PTC-MR, PROTEIN and NCI1.}
    \begin{tabular}{l c c c  c}
    \hline
    variants/ data sets & MUTAG & PTC-MR & PROTEIN & NCI1\\
    \hline
    Full batch & 4h 29' & 8h 50' &  $\textgreater$ 3 days &  $\textgreater$ 1 week\\
    Stochastic & 1' 32'' & 3' 48'' & 1h 20'40''  & 5h 5'\\
    \hline
    \end{tabular}
    
    \label{tab:runningtime_realworld}
\end{table}

\section{Conclusion and Discussion}
In this work, we proposed to learn the weights of substructures of graphs, particularly, \emph{subtree patterns} (extracted by WL labeling scheme), to overcome the limitations of current graph kernels. We considered the problem of incorporating subtree pattern weights for WWL kernel \cite{togninalli2019wasserstein} by formulating the parametric form of Wasserstein distance taking into account subtree pattern weights, and learning these weights from data optimally for the tasks. 

Our proposed method has several advantages. First, it can learn the importance of subtree patterns specifically for the tasks through their weights in the parametric distance function. Second, the kernels converted from the learned parametric function of subtree pattern weights are valid. Third, the efficient stochastic algorithm for learning the weights has high scalability for large scale data sets, and its theoretical guarantees are provided. 

Although we considered WWL kernel for extracting important subtree patterns, an interesting and worthwhile extension of our work would be to apply this idea to other WL based graph kernels such as WL subtree and WL-OA kernels. The improvements of the optimization algorithm for learning subtree pattern weights in terms of both convergence and efficiency would also be our future work.

%
%

\bibliographystyle{spmpsci}      
\bibliography{mlj}
%
%
\appendix
\section{Appendices}
\subsection{Proof of Lemma \ref{boundonparams}}.
\begin{proof}
We prove the lemma by following the notion of using the same randomness for two data set $D_{n}$ and $D_{n,k}$ as in \cite{hardt2016train}.
Particularly, we supply the sample sequences $S=\{p_{1}=(z_{i_{1}},z_{j_{1}}),...,p_{T}=(z_{i_{T}},z_{j_{T}}) \}$ to two  identical learning algorithms except that for some $t$ ($1\leq t \leq T$), if
$p_{t}$ contains $z_{k}$ ($p_{t}=(z_{k}, z_{j_{t}})$ or $(z_{i_{t}}, z_{k})$), we replace it with $\hat{p}_{t}=(\hat{z}_{k}, z_{j_{t}})$ or $(z_{i_{t}}, \hat{z}_{k})$. So, there are two cases to consider:\\
\noindent
\textbf{Case 1:} At step $t$, Algorithm \ref{pSGD} picks a pair of samples ($z$, $z^\prime$) that contain no $z_{k}$ ($z\neq z_{k}$ and $z^\prime\neq z_{k}$) and this
case occurs with probability $(1-\frac{1}{n})^{2}$.
Then, we have:
\begin{align*}
    &\left\Vert \textbf{W}^{(t+1)}_{n}-\textbf{W}^{(t+1)}_{n,k}\right\Vert_{2}^{2}  = \left\Vert \mathrm{proj}_{\mathcal{C}}\left[\textbf{W}^{(t)}_{n} - \mu\nabla\ell\left(\textbf{W}^{(t)}_{n}, z, z^\prime\right)\right] -\mathrm{proj}_{\mathcal{C}}\left[\textbf{W}^{(t)}_{n,k} - \mu\nabla\ell\left(\textbf{W}^{(t)}_{n,k}, z, z^\prime\right)\right]\right\Vert_{2}^{2}\\
    &\leq \left\Vert \textbf{W}^{(t)}_{n} - \mu\nabla\ell\left(\textbf{W}^{(t)}_{n}, z, z^\prime\right) -\textbf{W}^{t}_{n,k} + \mu\nabla\ell\left(\textbf{W}^{(t)}_{n,k}, z, z^\prime\right)\right\Vert_{2}^{2}\\
    &= \left\Vert \textbf{W}^{(t)}_{n}-\textbf{W}^{(t)}_{n,k}\right\Vert_{2}^{2} - 2\mu\left(\textbf{W}^{(t)}_{n}-\textbf{W}^{(t)}_{n,k}\right)\left(\nabla\ell\left(\textbf{W}^{(t)}_{n}, z, z^\prime\right)-\nabla\ell\left(\textbf{W}^{(t)}_{n,k}, z, z^\prime\right)\right)\\
    &+ \mu^{2}\left\Vert \nabla\ell(\textbf{W}^{(t)}_{n}, z, z^\prime)-\nabla\ell\left(\textbf{W}^{(t)}_{n,k}, z, z^\prime\right)\right\Vert_{2}^{2}\\
    &\leq \left\Vert \textbf{W}^{(t)}_{n}-\textbf{W}^{(t)}_{n,k}\right\Vert_{2}^{2} - \mu\left(\frac{2}{\beta}-\mu\right)\left\Vert\nabla\ell\left(\textbf{W}^{(t)}_{n}, z, z^\prime\right)-\nabla\ell\left(\textbf{W}^{(t)}_{n,k}, z, z^\prime\right)\right\Vert_{2}^{2}
\end{align*}
The second line is obtained by the fact that $\left\Vert \mathrm{proj}_{\mathcal{C}}[\textbf{u}]-\mathrm{proj}_{\mathcal{C}}[\textbf{v}]\right\Vert_{2}\leq \left\Vert \textbf{u}-\textbf{v}\right\Vert_{2}$ for $\textbf{u},\textbf{v}$ in the domain. The last line is obtained by the $\beta$-smoothness of function $\ell$ (see Lemma \ref{propertiesofell}). So we have the following inequality (by selecting $\mu\leq \frac{2}{\beta}$):
\begin{equation}
\label{case1}
    \left\Vert \textbf{W}^{(t+1)}_{n}-\textbf{W}^{(t+1)}_{n,k}\right\Vert_{2} \leq \left\Vert \textbf{W}^{(t)}_{n}-\textbf{W}^{(t)}_{n,k}\right\Vert_{2}
\end{equation}
\textbf{Case 2:} At step $t$, Algorithm \ref{pSGD} picks a pair of samples ($z$, $z^\prime$) that contain $z_{k}$ ($z= z_{k}$ and $z^\prime= z_{k}$) and this case occurs with probability $1-(1-\frac{1}{n})^{2}$. Then we have:
\begin{equation}
\label{case2}
    \left\Vert \textbf{W}^{(t+1)}_{n}-\textbf{W}^{(t+1)}_{n,k}\right\Vert_{2} \leq  \left\Vert \textbf{W}^{(t)}_{n}-\textbf{W}^{(t)}_{n,k}\right\Vert_{2} + 2\mu L
\end{equation}
The above inequality holds as the norm of gradient of loss function $\ell$ is upper bounded by $L$.
From two inequalities (\ref{case1}) and (\ref{case2}), and considering the probabilities of two cases, we have the following inequality:
\begin{align}
    \textbf{E}_{\text{SGD}}\left[ \left\Vert \textbf{W}^{(t+1)}_{n}-\textbf{W}^{(t+1)}_{n,k}\right\Vert_{2} \right] \leq \left(1-\frac{1}{n}\right)^{2}\textbf{E}_{\text{SGD}}\left[\left\Vert \textbf{W}^{(t)}_{n}-\textbf{W}^{(t)}_{n,k}\right\Vert_{2}\right]\\
    +\left(1-\left(1-\frac{1}{n}\right)^{2}\right)\left(\textbf{E}_{\text{SGD}}\left[\left\Vert \textbf{W}^{(t)}_{n}-\textbf{W}^{(t)}_{n,k}\right\Vert_{2}\right]+2\mu L \right)\\
    = \textbf{E}_{\text{SGD}}\left[\left\Vert \textbf{W}^{(t)}_{n}-\textbf{W}^{(t)}_{n,k}\right\Vert_{2}\right] + 2\mu L \left( 1-\left(1-\frac{1}{n}\right)^{2}\right)
\end{align}
By the above recursive formula, we obtain the following:
\begin{align}
    \textbf{E}_{\text{SGD}}\left[\left\Vert \textbf{W}^{(T)}_{n}-\textbf{W}^{(T)}_{n,k}\right\Vert_{2}\right] \leq 2\left(1-\left(1-\frac{1}{n}\right)^{2}\right)\mu T L \leq \frac{4}{n}\mu T L
\end{align}
which completes the proof.
\end{proof}
\subsection{Proof of Lemma \ref{Lemma:BoundKn}}
\begin{proof}
By the definition of $\mathbb{K}_{n}$ as in (\ref{expectedgap}), we have the following:
\begin{align*}
    &\textbf{E}_{D_{n}}[\mathbb{K}_{n}] = \textbf{E}_{D_{n}}\textbf{E}_{\text{SGD}}[R(\textbf{W}_{n})-\textit{R}_{D_{n}}(\textbf{W}_{n})]\\
    &= \textbf{E}_{D_{n}}\textbf{E}_{\text{SGD}}\textbf{E}_{z,z^\prime}\ell(\textbf{W}_{n},z,z^\prime)-\textbf{E}_{D_{n}}\textbf{E}_{\text{SGD}}\frac{1}{n^{2}}\sum_{i,j=1}^{n}\ell(\textbf{W}_{n}, z_{i}, z_{j})\\
    &=\textbf{E}_{D_{n},z,z^\prime}\textbf{E}_{\text{SGD}}\left[ \frac{1}{n^{2}} \sum_{k=1}^{n}\sum_{j=1}^{n}\left[ \ell(\textbf{W}_{n},z,z^\prime)-\ell(\textbf{W}_{n},z_{k},z^\prime)+\ell(\textbf{W}_{n},z_{k},z^\prime)-\ell(\textbf{W}_{n},z_{k},z_{j}) \right] \right]\\
    &=\textbf{E}_{D_{n},z,z^\prime}\textbf{E}_{\text{SGD}}\left[ \frac{1}{n^{2}} \sum_{k=1}^{n}\sum_{j=1}^{n}\left[ \ell(\textbf{W}_{n},z,z^\prime)-\ell(\textbf{W}_{n},z_{k},z^\prime)+\ell(\textbf{W}_{n},z_{k},z^\prime)-\ell(\textbf{W}_{n},z_{k},z_{j}) \right] \right]\\
    &=\underbrace{\textbf{E}_{D_{n},z,z^\prime}\textbf{E}_{\text{SGD}}\left[ \frac{1}{n^{2}} \sum_{k=1}^{n}\sum_{j=1}^{n}\left[ \ell(\textbf{W}_{n},z,z^\prime)-\ell(\textbf{W}_{n},z_{k},z^\prime)\right]\right]}_{(a)}\\
    &+\underbrace{\textbf{E}_{D_{n},z,z^\prime}\textbf{E}_{\text{SGD}}\left[ \frac{1}{n^{2}} \sum_{k=1}^{n}\sum_{j=1}^{n}\left[\ell(\textbf{W}_{n},z_{k},z^\prime)-\ell(\textbf{W}_{n},z_{k},z_{j}) \right] \right]}_{(b)}
\end{align*}

We first process the part (a) which is equivalent to the following:
\begin{align*}
    \text{(a)} &= \frac{1}{n}\sum_{k=1}^{n}\textbf{E}_{D_{n},z,z^\prime}\textbf{E}_{\text{SGD}}\left[ \ell(\textbf{W}_{n},z,z^\prime) \right]-\frac{1}{n}\sum_{k=1}^{n}\textbf{E}_{D_{n},z,z^\prime}\textbf{E}_{\text{SGD}}\left[ \ell(\textbf{W}_{n},z_{k},z^\prime) \right]\\
    &=\frac{1}{n}\sum_{k=1}^{n}\textbf{E}_{D_{n},z,z^\prime}\textbf{E}_{\text{SGD}}\left[ \ell(\textbf{W}_{n},z,z^\prime) \right]-\frac{1}{n}\sum_{k=1}^{n}\textbf{E}_{D_{n},\hat{z}_{k},z^\prime}\textbf{E}_{\text{SGD}}\left[ \ell(\textbf{W}_{n},z_{k},z^\prime) \right]\\
    &=\frac{1}{n}\sum_{k=1}^{n}\textbf{E}_{D_{n},z,z^\prime}\textbf{E}_{\text{SGD}}\left[ \ell(\textbf{W}_{n},z,z^\prime) \right]-\frac{1}{n}\sum_{k=1}^{n}\textbf{E}_{D_{n},z,z^\prime}\textbf{E}_{\text{SGD}}\left[ \ell(\textbf{W}_{n,k},z,z^\prime) \right]\\
    &\leq \frac{1}{n}\sum_{k=1}^{n}\textbf{E}_{D_{n},z,z^\prime}\textbf{E}_{\text{SGD}}\left[\ell(\textbf{W}_{n},z,z^\prime)-\ell(\textbf{W}_{n,k},z,z^\prime) \right]\leq k_{n}
\end{align*}
Similarly, we also prove that $\text(b)\leq k_{n}$ which completes the proof.
\end{proof}

\subsection{Proof of Lemma \ref{Lemma:BoundKminusKn}}

\begin{proof}
By the definition of $\mathbb{K}_{n}$ as in (\ref{expectedgap}), we have:
\begin{align*}
    |\mathbb{K}_{n}-\mathbb{K}_{n,k}|=|\textbf{E}_{\text{SGD}}[R(\textbf{W}_{n})-\textit{R}_{D_{n}}(\textbf{W}_{n})]-\textbf{E}_{\text{SGD}}[R(\textbf{W}_{n,k})-\textit{R}_{D_{n,k}}(\textbf{W}_{n,k})]|\\
    \leq \underbrace{|\textbf{E}_{\text{SGD}}R(\textbf{W}_{n})-\textbf{E}_{\text{SGD}}R(\textbf{W}_{n,k})|}_{\text{(c)}} + \underbrace{|\textbf{E}_{\text{SGD}}R_{D_{n}}(\textbf{W}_{n})-\textbf{E}_{\text{SGD}}R_{D_{n,k}}(\textbf{W}_{n,k})|}_{\text{(d)}}
\end{align*}
We bound the two terms (c) and (d) as follows:

\begin{align*}
    \text{(c)}&=|\textbf{E}_{\text{SGD}}\textbf{E}_{z,z^\prime}[\ell(\textbf{W}_{n},z,z^\prime)-\ell(\textbf{W}_{n,k},z,z^\prime)]|\\
    &\leq\textbf{E}_{z,z^\prime}|\textbf{E}_{\text{SGD}}[\ell(\textbf{W}_{n},z,z^\prime)-\ell(\textbf{W}_{n,k},z,z^\prime)]|\leq k_{n}
\end{align*}

\begin{align*}
    \text{(d)}&=|\textbf{E}_{\text{SGD}}\frac{1}{n^{2}}\sum_{z_{i}\in D_{n}}\sum_{z_{j}\in D_{n}}\ell(\textbf{W}_{n},z_{i},z_{j})-\textbf{E}_{\text{SGD}}\frac{1}{n^{2}}\sum_{z_{i}\in D_{n,k}}\sum_{z_{j}\in D_{n,k}}\ell(\textbf{W}_{n,k},z_{i},z_{j})|\\
    &=|\frac{1}{n^{2}}\sum_{i\neq k, j\neq k}\textbf{E}_{\text{SGD}}[\ell(\textbf{W}_{n},z_{i},z_{j})-\ell(\textbf{W}_{n,k},z_{i},z_{j})]\\
    &+\frac{1}{n^{2}}\sum_{i\neq k}\textbf{E}_{\text{SGD}}[\ell(\textbf{W}_{n},z_{i},z_{k})-\ell(\textbf{W}_{n,k},z_{i},z_{k})]\\
    &+\frac{1}{n^{2}}\sum_{j\neq k}\textbf{E}_{\text{SGD}}[\ell(\textbf{W}_{n},z_{k},z_{j})-\ell(\textbf{W}_{n,k},z_{k},z_{j})]\\
    &\leq \frac{(n-1)^{2}}{n^{2}}k_{n} + \frac{2(n-1)}{n^{2}}M < k_{n}+\frac{2M}{n}
\end{align*}
The two above inequalities complete the proof.
\end{proof}

\end{document}